\newtheorem{theorem}{Theorem}
\newtheorem{definition}[theorem]{Definition}
\newtheorem{lemma}[theorem]{Lemma}
\newtheorem{proposition}[theorem]{Proposition}
\newtheorem{remark}[theorem]{Remark}
\newcommand{\mbf}[1]{\mathbf{#1}}
\newcommand{\sbf}[1]{\boldsymbol{#1}}
\DeclareMathOperator*{\argmax}{arg\,max}
\newcommand{\vasp}{{\textsc ProjSe}} 
\begin{document}

\title{Scalable variable selection for two-view learning tasks with
projection operators 
}

\author{
  Sandor Szedmak \\ 
  Department of Computer Science\\
  Aalto University\\
  Espoo, Finland \\
 \texttt{sandor.szedmak@aalto.fi} \\
 \And
  Riikka Huusari \\ 
  Department of Computer Science\\
  Aalto University\\
  Espoo, Finland \\
 \texttt{riikka.huusari@aalto.fi} \\
 \And
  Tat Hong Duong Le \\ 
  Department of Computer Science\\
  Aalto University\\
  Espoo, Finland \\
 \texttt{duong.h.le@aalto.fi} \\
 \And
  Juho Rousu \\ 
  Department of Computer Science\\
  Aalto University\\
  Espoo, Finland \\
 \texttt{juho.rousu@aalto.fi} \\
}



\date{}
\maketitle

\begin{abstract}

In this paper we propose a novel variable selection method for two-view settings, or for vector-valued supervised learning problems. 
Our framework is able to handle extremely large scale selection tasks, where number of data samples could be even millions. 
In a nutshell, our method performs variable selection by iteratively selecting variables that are highly correlated with the output variables, but which are not correlated with the previously chosen variables.   
To measure the correlation, our method uses the concept of projection operators and their algebra.  
With the projection operators the relationship, correlation, between sets of input and output variables can also be expressed by kernel functions, thus nonlinear correlation models can be exploited as well.
We experimentally validate our approach, showing on both synthetic and real data its scalability and the relevance of the selected features. 
\\
\textbf{Keywords:} Supervised variable selection, vector-valued learning, projection-valued measure, reproducing kernel Hilbert space
     
\end{abstract}


\section{Introduction}



Vector-valued, or more generally structured output learning tasks arising from various domains have attracted much research attention in recent years \cite{micchelli2005learning,deshwal2019learning,brogat2022vector}. 
For both supervised but also unsupervised learning approaches, multi-view data has been of interest \cite{hotelling1936relations,xu2013survey,quang2013unifying}. 
%
Despite many successful approaches for various multi-view and vector-valued learning settings, including interpretability to these models has received less attention.  
While there are various feature selection and dimensionality reduction methods either for scalar-valued learning tasks, or unsupervised methods for data represented in a single view~\cite{zebari2020comprehensive, Li_2017, Anette2018ABO,
RePEc:eee:csdana:v:143:y:2020:i:c:s016794731930194x}, there is scarcity of methods suitable for when data is represented in two views, or arises from a vector-valued learning task. 
From the point of view  of interpretability, especially feature selection methods are advantageous over dimensionality reduction since the relevant features are directly obtained as a result and not given only in (linear) combinations.



Recently, some feature selection methods have been proposed for structured output learning tasks.
\cite{brouard2022feature} proposed kernel-based non-linear feature selection model relying on sparsity regularization. 
Another supervised feature selection approach based on kernel methods is introduced in~\cite{10.5555/2503308.2343691}, this one relying
instead on forward- and backward-selection ideology. 
In addition,~\cite{https://doi.org/10.48550/arxiv.2110.05852} discusses
feature selection in conjunction with kernel-based models, obtaining
sparsity implicitly via loss function without explicit
regularization term. An alternative, spline based, approach to the non-linear feature
selection is proposed by \cite{JMLR:v18:17-178}.  
These methods, relying on the kernel evaluations between data samples for both inputs and outputs, tend not to scale very well to large sample sizes.

In this paper, we introduce a novel variable selection approach for  vector-valued, or two-view learning tasks, including CCA. 
Our method is based on efficient iterative computation of projections of input variables to the vector space intersection between the  space spanned by the output variables and the one of the previously selected input variables. In this space, the input variables are then selected by a correlation-based criterion.
%
%
%
Going one step further, we also exploit a kernel-based representation of the variables, allowing us to capture complex, nonlinear relationships. Here, we consider the kernelised representation of the variables instead of data samples -- in essence, we model the co-variance on the features in the Hilbert space induced by the kernel. Notably, both input and output features are captured with the same kernelisation. 
This is in stark contrast to other proposed kernel-based feature selection approaches in literature, where separate kernels are used for data samples in input and output spaces~\cite{brouard2022feature, 10.5555/2503308.2343691, https://doi.org/10.48550/arxiv.2110.05852}.
We can more readily draw comparisons to canonical correlation analysis (CCA) and its kernelized version, where the correlations are computed between
two sets of variables instead 
of pairs of individual ones \cite{DeBie2005}.

For many approaches, scalability in feature selection can be a challenge for when the data dimensionality is extremely large.  
Some supervised linear feature selection models adapted to this setting are proposed in~\cite{10.5555/1577069.1755853,aghazadeh2018mission, 10.1371/journal.pcbi.1010180}. We note, that all these methods are for the supervised setting, but with scalar-valued output variables. While scalability w.r.t the feature dimensionality is often considered due to motivations arising from fat data, the scalability to large sample sizes is less focused on.
Traditionally, kernelized algorithms, while powerful, are very poorly scalable due to the dependence to the kernel matrix, especially if its inverse is required. 
Contrary to the usual, by leveraging the recursive formulation of our algorithm and a trick with singular value decomposition on the variable representation, our approach is extremely scalable to large sample sizes - which we also demonstrate experimentally in Section~\ref{sec:experiments}.


To summarize, our main contributions in this paper are as follows:
\begin{itemize}
\item we propose 
projective selection (\vasp) algorithm, a novel approach for variable selection for vector-valued or two-view learning problems that is based on projection operators. In \vasp{} the result of the feature selection only depends on the subspace spanned by the outputs, not on the specific values (invariance). 
\item our proposed iterative method offers high scalability even for the kernelised formulation capturing non-linearities in the data, due to a trick with singular value decomposition applied to the feature representation.
\item we experimentally validate the proposed approach, showing both relevance of the selected features and the efficiency of the algorithm. 
\end{itemize}

\begin{table}[b]
\caption{Some of the frequently used notation in this paper.}\label{tb:notation}
\centering
\begin{tabular}{@{}l@{\quad}p{4.6in}@{}}
$\mathcal{H}$  & is a Hilbert space - unless otherwise noted, it has
                 finite dimension $d$, in which case $\mathcal{H} = \mathbb{R}^{d}$. \\
$\oplus$ & denotes the direct sum of subspaces \\  
$\mbf{I}$  & is the identity operator acting on $\mathcal{H}$. \\
$\mathcal{L}$  & is an arbitrary subspace of $\mathcal{H}$. \\
$\mathcal{L}_{\mbf{X}}$  & is a subspace of $\mathcal{H}$, spanned by
                          the columns of matrix $\mbf{X}$. \\
$\mathcal{L}^{\perp}$  & is a subspace of $\mathcal{H}$, the
                         orthogonal complement of $\mathcal{L}$. \\ 
$\mbf{P}_{\mathcal{L}}$   & is an orthogonal projection operator into subspace
                  $\mathcal{L}$, $\mbf{P}_{\mathcal{L}}: \mathcal{H} \rightarrow
                            \mathcal{L}$ \\
$\mathcal{L}_{\mbf{P}}$ & is the subspace corresponding to the
                          projection operator $\mbf{P}$. \\  
$\mbf{P}_{\mathcal{L}^{\perp}}$   & is an orthogonal projection operator into
                          the orthogonal complement of subspace
                          $\mathcal{L}$. \\
$\mbf{P}_{\mbf{X}}$  & is an orthogonal projection operator into the
                       subspace of $\mathcal{H}$, spanned by 
                          the columns of matrix $\mbf{X}$. It is the
                       same as $\mbf{P}_{\mathcal{L}_{X}}$.\\
$\mbf{P}_{\mbf{X}^{\perp}}$  & is an orthogonal projection operator into the
                       subspace of $\mathcal{H}$ orthogonal to the
                               subspace spanned by 
                          the columns of matrix $\mbf{X}$. It is the
                       same as $\mbf{P}_{\mathcal{L}_{X}^{\perp}}$.\\
$\mbf{A}^{+}$ & denotes the Moore-Penrose inverse of matrix
                $\mbf{A}$. \\
$[n]$   & is a short hand notation for the set $\{1,\dots,n\}.$ \\ 
$\mbf{A}\circ \mbf{B}$ & denotes pointwise(Hadamard) product of 
                         matrices $\mbf{A}$ and $\mbf{B}$. \\
$\mbf{A}^{\circ n }$ & is the pointwise power of matrix $\mbf{A}$. \\  
$\mbf{A}[:,\mathcal{I}]$& selects the subset of columns of matrix $\mbf{A}$ with indices in set $\mathcal{I}$.\\
\end{tabular} 
\end{table}

The paper is organised as follows. In the next section we give overview of the method, before moving to more rigorous treatment in Section~\ref{sec:background}. There we give a brief introduction to projection operators and their matrix representation, and discuss the key detail of our approach, expressing the projector into intersection. We then move on to describing our large-scale kernelized adaptation of the algorithm in Section~\ref{sec:kernelized}. We validate our approach experimentally in Section~\ref{sec:experiments} before concluding.



\section{Method overview}\label{sec:algprojfilter}

\begin{figure}[tb]
  \centering
  \includegraphics[width=4in,height=3in]{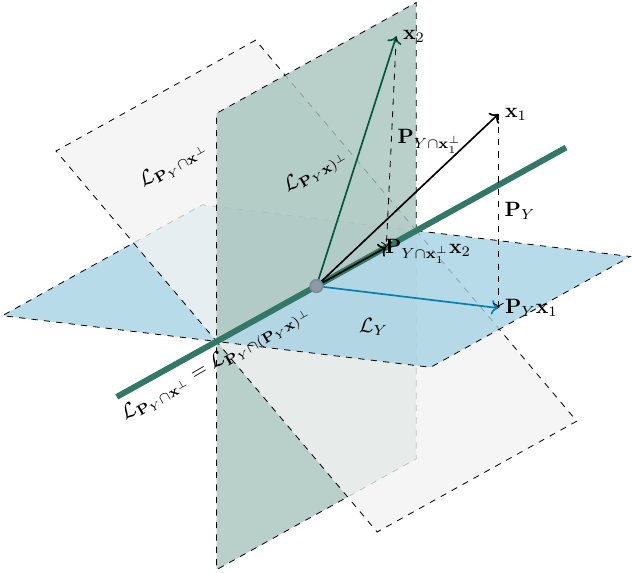}
  \caption{Illustration of the main steps of the algorithm}
  \label{fig:projective_selection}
\end{figure}




Our algorithm is designed to perform variable selection when there are multiple dependent variables of interest.
We denote the matrix containing the data from which the variables are selected as $\mbf{X}\in\mathbb{R}^{m\times n_x}$, and the reference data as
$\mbf{Y}\in\mathbb{R}^{m\times n_y}$ -- the sample size is $m$, and the number of features/variables are $n_x$ and $n_y$ (see other frequently used notation in Table~\ref{tb:notation}). Here $\mbf{X}$ and $\mbf{Y}$ could also correspond to vector-valued inputs and outputs of some supervised learning task.
Our method is based on defining correlation via projection operators: 
we define the correlation between a variable vector $\mathbf{x}\in\mathbb{R}^m$ (a column vector from $\mathbf{X}$ containing the values of a single input variable for all data points) and a
set of variables in columns of matrix $\mathbf{Y}$, as
\begin{equation}
\label{eq:subspace_correlation}
\text{corr}(\mbf{x},\mbf{Y}) =
  \left \| \mbf{P}_{\mathcal{L}_Y}\frac{\mbf{x}}{||\mbf{x}||} \right
  \| =
  \left\langle\mbf{P}_{\mathcal{L}_Y}\frac{\mbf{x}}{||\mbf{x}||},\mbf{P}_{\mathcal{L}_Y}\frac{\mbf{x}}{||\mbf{x}||}\right\rangle^{\frac{1}{2}} =
    \left\langle \frac{\mbf{x}}{||\mbf{x}||},\mbf{P}_{\mathcal{L}_Y}\frac{\mbf{x}}{||\mbf{x}||}\right\rangle^{\frac{1}{2}} 
\end{equation}  
where $ \mbf{P}_{\mathcal{L}_Y}$ (or $\mbf{P}_Y$ in shorthand) is the orthogonal projection operator into a subspace
$\mathcal{L}_Y$ spanned by the columns of $\mathbf{Y}$. 
This definition is motivated by the concept of {\textit Projection-Valued Measure} which plays a significant role in quantum mechanics theory
(see for example \cite{Nielsen2001}). 
Our approach selects variables from input data $\mathbf{X}$ iteratively, such that correlation between the selected variable and the outputs is high, while correlation to the previously selected variables is low. 

\begin{remark}
  For sake of simplicity, we assume that for all $\mbf{x}\in\mathbb{R}^{m}$,
  $\|\mbf{x}\|=1$. 
\end{remark}

Our variable selection algorithm, \vasp{}, is illustrated in Figure~\ref{fig:projective_selection}. 
The first set of variables is is chosen simply to maximize the projection onto the subspace spanned by columns of $\mbf{Y}$, $\mathcal{L}_Y$. This is illustrated with $\mbf{x}_1$, which is projected with $\mbf{P}_Y$ as $\mbf{P}_Y\mbf{x}_1$. 
The second set of features chosen, $\mbf{x}_2$ in the figure, is projected into the intersection of $\mathcal{L}_Y$, and the orthogonal complement of the chosen feature $\mbf{x}_1$, $\mathcal{L}_{\mbf{x_1}^\perp}$. At this step, the correlation is measured with the projection operator $\mbf{P}_{\mathcal{L}_{Y} \cap \mathcal{L}_{\mbf{x}_1^{\perp}}}$. 
Interestingly, it turns out that this projected feature, $\mbf{P}_{\mbf{Y}\cap \mbf{x}_1^\perp} \mbf{x}_2$, lies also in the intersection of $\mathcal{L}_Y$ and $\mathcal{L}_{(\mbf{P}_Y \mbf{x}_1)^\perp}$.
This observation paves the way for building our efficient, recursive algorithm for the feature selection with projection operators.

The pseudo-code of the basic form of our proposed variable selection by projection, \vasp{}, algorithm is displayed in Figure~\ref{fig:selection_algorithm}. The approach is fully deterministic without randomness, and thus practical to apply. 
Similarly to CCA, our variable selection algorithm in a sense joins the variable spaces of the inputs and outputs -- both of them are considered in the same space. At the same time, in order to our selection approach to work, $\mathcal{L}_\mbf{X}$ should not be fully orthogonal $\mathcal{L}_\mbf{Y}$. 
Additionally, due to the properties of the projection operators, our approach promotes invariance: the selected explanatory variables (input features) depend only on the subspace spanned by the response variables (output features), and are independent on any transformation on the response variables that would span the same subspace. These transformations can be singular or even nonlinear, as long as they are automorphisms of the output space. 


In this basic form the algorithm is is scalable to medium-scale data, as it is limited memory required to store the projection matrix.
In the following sections we present techniques that allow scaling to very large datasets, e.g. $m>1000000$ and $m \gg n_x,n_y$. 
A recursive representation of the projection operators (see Section 
\ref{sec:matrix_represenation}), and especially the singular vector
based form, (eq. (\ref{eq:proj-singular-repr})), significantly reduces
the demand for resources, both for memory and for computation time.

\begin{figure}[tb]
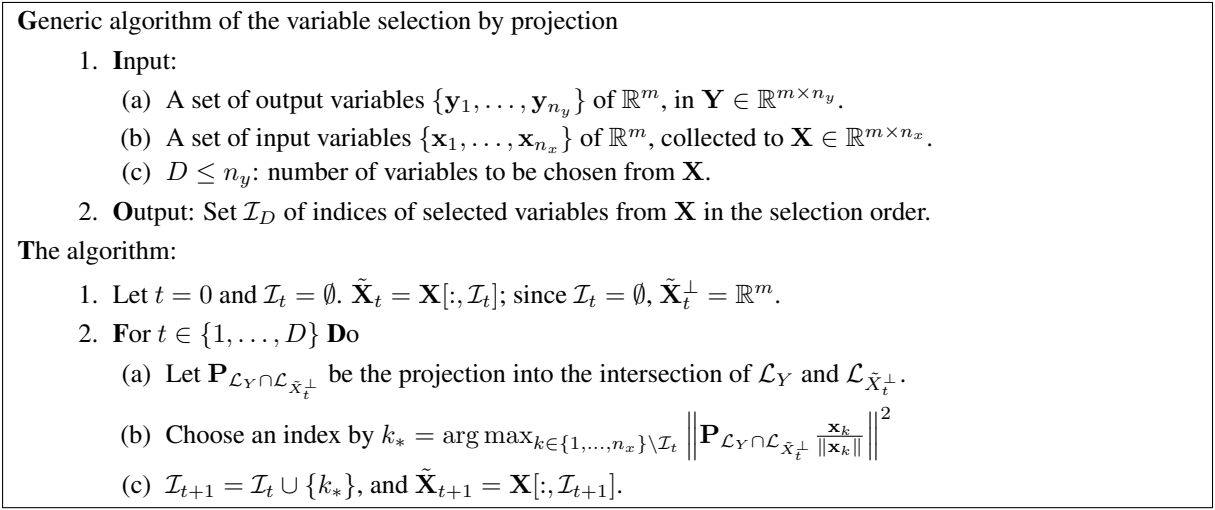

\fbox{
\begin{minipage}{0.95\linewidth}
{\textbf Generic algorithm of the variable selection by projection}
\begin{enumerate}
\item  
{\textbf Input:} 
\begin{enumerate}
\item A set of output variables $\{\mbf{y}_1,\dots,\mbf{y}_{n_y}\}$ of
  $\mathbb{R}^m$, in $\mbf{Y}\in \mathbb{R}^{m \times n_y}$. 
\item A set of input variables $\{\mbf{x}_1,\dots,\mbf{x}_{n_x}\}$ of
  $\mathbb{R}^m$, collected to $\mbf{X}\in
  \mathbb{R}^{m\times n_x}$.  
\item $D \le n_y$: number of variables to be chosen from $\mbf{X}$.
\end{enumerate}
\item {\textbf Output:} Set $\mathcal{I}_{D}$ of indices of selected variables from $\mbf{X}$ in the selection order.
\end{enumerate}
{\textbf The algorithm:}
\begin{enumerate}
\item Let $t=0$ and $\mathcal{I}_{t}= \emptyset$. 
$\tilde{\mbf{X}}_t = \mbf{X}[:,\mathcal{I}_t]$; since $\mathcal{I}_{t}=\emptyset$,  $\tilde{\mbf{X}}_t^{\perp} =
\mathbb{R}^{m}$.  
 
\item {\textbf For} $t \in \{1,\dots,D\}$ {\textbf Do}
\begin{enumerate}
\item  Let 
$\mbf{P}_{\mathcal{L}_{Y} \cap \mathcal{L}_{\tilde{X}_t^{\perp}}}$
be the projection into the intersection of $\mathcal{L}_{Y}$
and  $\mathcal{L}_{\tilde{X}_t^{\perp}}$. 
\item Choose an index by 
$
k_{*} = \argmax_{k \in \{1,\dots,n_x\}\setminus \mathcal{I}_t}
\left \|\mbf{P}_{\mathcal{L}_{Y} \cap
  \mathcal{L}_{\tilde{X}_t^{\perp}}} \tfrac{\mbf{x}_k}{\|\mbf{x}_k\|}
\right \|^2
$ 
\item 
$\mathcal{I}_{t+1} = \mathcal{I}_{t} \cup \{k_{*}\}$, and
$\tilde{\mbf{X}}_{t+1} = \mbf{X}[:,\mathcal{I}_{t+1}]$. 
\end{enumerate}

\end{enumerate}

\end{minipage}
}
\caption{The generic algorithm of supervised variable selection by projection}
\label{fig:selection_algorithm}
\end{figure}

\section{Projection operators}\label{sec:background}

This section first introduces relevant background on projection operators and their algebra. Then, two key points for our algorithm are discussed: the matrix representation of the projectors, and how the projection into the intersection can be expressed.

\subsection{Projection operators, projectors}

We now briefly introduce the mathematical framework describing the
projection operators of a Hilbert space. The proofs of the statements mentioned, as well as further details, are presented for example by \cite{kreyszig1989}. 

Let $\mbf{T}$ be a linear operator $\mbf{T}: \mathcal{H} \rightarrow
\mathcal{H}$. Its adjoint $\mbf{T}^{*}: \mathcal{H} \rightarrow
\mathcal{H}$ is defined by $\braket{\mbf{y},\mbf{T}^{*}\mbf{x}} =
\braket{\mbf{T}\mbf{y},\mbf{x}}$ for all $\mbf{x},\mbf{y}\in \mathcal{H}$.  A linear operator $\mbf{T}$ is
self-adjoint, or Hermitian if $\mbf{T}=\mbf{T}^{*}$, unitary if
$\mbf{T}^{*}=\mbf{T}^{-1}$  
and normal if $\mbf{T}\mbf{T}^{*} =\mbf{T}^{*}\mbf{T}$. 
On the set of self-adjoint operators of $\mathcal{H}$ one can define a
partial order $\preceq$ by 
\begin{equation}
\mbf{T}_1 \preceq \mbf{T}_2 \Leftrightarrow
\braket{\mbf{T}_1\mbf{x},\mbf{x}} \le
\braket{\mbf{T}_2\mbf{x},\mbf{x}}, 
\end{equation}  
for all $\mbf{x}\in \mathcal{H}$. 
An operator $\mbf{T}$ is positive if 
\begin{equation}
\mbf{0} \preceq \mbf{T} \Leftrightarrow 0 \le \braket{\mbf{T},\mbf{x}},
\end{equation}  
for all $\mbf{x}\in \mathcal{H}$. As a consequence we have
\begin{equation}
\mbf{T}_1 \preceq \mbf{T}_2 \Leftrightarrow \mbf{0} \preceq
\mbf{T}_2-\mbf{T}_1. 
\end{equation}  

Let $\mathcal{L}$ be a subspace of $\mathcal{H}$, the orthogonal
complement of $\mathcal{L}$ is given by
$\mathcal{L}^{\perp}=\{ \mbf{x}| \mbf{x} \perp \mbf{z}, \forall \mbf{z}
\in \mathcal{L}, \mbf{x} \in \mathcal{H} \} $.  

\begin{theorem} 
For any subspace $\mathcal{L} \subseteq \mathcal{H}$, $\mathcal{H}=
\mathcal{L}\oplus \mathcal{L}^{\perp}$. 
\end{theorem}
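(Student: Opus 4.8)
The plan is to prove the classical orthogonal decomposition theorem for Hilbert spaces: every $\mathbf{x} \in \mathcal{H}$ can be written uniquely as $\mathbf{x} = \mathbf{u} + \mathbf{v}$ with $\mathbf{u} \in \mathcal{L}$ and $\mathbf{v} \in \mathcal{L}^\perp$, together with the fact that $\mathcal{L} \cap \mathcal{L}^\perp = \{\mathbf{0}\}$. Since in this paper $\mathcal{H}$ is generally finite dimensional ($\mathcal{H} = \mathbb{R}^d$), I would first note that the general Hilbert-space statement requires $\mathcal{L}$ to be closed, which is automatic in finite dimensions; I would remark on this briefly and then either invoke the closedness hypothesis or restrict to the finite-dimensional case as the paper does.

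First I would establish existence of the decomposition via the \emph{nearest-point} (projection) argument. Fix $\mathbf{x} \in \mathcal{H}$ and set $\delta = \inf_{\mathbf{w} \in \mathcal{L}} \|\mathbf{x} - \mathbf{w}\|$. Take a minimizing sequence $(\mathbf{w}_n) \subseteq \mathcal{L}$ with $\|\mathbf{x} - \mathbf{w}_n\| \to \delta$, and apply the parallelogram law to $\mathbf{x} - \mathbf{w}_n$ and $\mathbf{x} - \mathbf{w}_m$ to show $(\mathbf{w}_n)$ is Cauchy; completeness of $\mathcal{H}$ and closedness of $\mathcal{L}$ give a limit $\mathbf{u} \in \mathcal{L}$ attaining the infimum. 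Then I would show $\mathbf{v} := \mathbf{x} - \mathbf{u} \in \mathcal{L}^\perp$: for any $\mathbf{z} \in \mathcal{L}$ and scalar $t$, the function $t \mapsto \|\mathbf{v} - t\mathbf{z}\|^2 = \|\mathbf{v}\|^2 - 2t\,\mathrm{Re}\braket{\mathbf{v},\mathbf{z}} + t^2\|\mathbf{z}\|^2$ has a minimum at $t = 0$, which forces $\braket{\mathbf{v},\mathbf{z}} = 0$. This yields $\mathbf{x} = \mathbf{u} + \mathbf{v}$ with $\mathbf{u} \in \mathcal{L}$, $\mathbf{v} \in \mathcal{L}^\perp$, i.e. $\mathcal{H} = \mathcal{L} + \mathcal{L}^\perp$.

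Next I would verify that the sum is direct, i.e. $\mathcal{L} \cap \mathcal{L}^\perp = \{\mathbf{0}\}$: if $\mathbf{w}$ lies in both, then $\mathbf{w} \perp \mathbf{w}$, so $\|\mathbf{w}\|^2 = \braket{\mathbf{w},\mathbf{w}} = 0$ and $\mathbf{w} = \mathbf{0}$. Combined with the previous step this gives $\mathcal{H} = \mathcal{L} \oplus \mathcal{L}^\perp$, and uniqueness of the decomposition follows immediately from directness. In the finite-dimensional setting the argument can alternatively be shortened by picking an orthonormal basis of $\mathcal{L}$ (Gram--Schmidt), defining $\mathbf{u}$ as the corresponding partial Fourier sum and checking $\mathbf{x} - \mathbf{u} \perp \mathcal{L}$ directly; I would mention this as the elementary route actually needed here.

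The main obstacle is the existence of the minimizer $\mathbf{u}$ — this is the only place completeness (and closedness of $\mathcal{L}$) genuinely enters, and it is where the parallelogram-law Cauchy estimate does the real work. Once the nearest point exists, the orthogonality of the residual and the triviality of $\mathcal{L} \cap \mathcal{L}^\perp$ are routine. For the purposes of this paper, where $\mathcal{H} = \mathbb{R}^d$, this obstacle evaporates since closed subspaces and completeness are automatic, so I would keep the write-up short and either cite \cite{kreyszig1989} for the general statement or give the two-line finite-dimensional orthonormal-basis proof.
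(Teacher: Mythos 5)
Your proof is correct, and it is essentially the proof the paper relies on: the paper does not prove this theorem itself but defers to \cite{kreyszig1989}, whose projection theorem is established by exactly your nearest-point/parallelogram-law argument, with the closedness caveat you correctly flag (and which the paper dispatches via its footnote that all subspaces of a finite-dimensional Hilbert space are closed). Nothing is missing; the short finite-dimensional Gram--Schmidt route you mention is also a perfectly adequate alternative in the paper's setting $\mathcal{H}=\mathbb{R}^{d}$.
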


A linear  operator $\mbf{P}$ is a projection operator if $\mbf{P}:
\mathcal{H} \rightarrow \mathcal{L}$ for a subspace $\mathcal{L}$ of
$\mathcal{H}$. To highlight the connection between the subspace and
projection, they can be also denoted as $\mathcal{L}_{P}$ and
$\mbf{P}_{L}$.    

An operator $\mbf{P}$ is {\em idempotent} if 
$
\mbf{P}\mbf{P}\mbf{x}=\mbf{P}\mbf{x},\ \text{or}\ \mbf{P}\mbf{P}=\mbf{P}
$
holds for any $\mbf{x}\in \mathcal{H}$.  
The projection operators can be characterized by the following statements.
\begin{theorem} 
A linear operator $\mbf{P}: \mathcal{H} \rightarrow \mathcal{H}$ is a
projection if it is self adjoint, $\mbf{P}=\mbf{P}^{*}$, and
idempotent $\mbf{P}\mbf{P} = \mbf{P}$. 
\end{theorem}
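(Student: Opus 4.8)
The plan is to show that the two algebraic hypotheses $\mbf{P}=\mbf{P}^{*}$ and $\mbf{P}\mbf{P}=\mbf{P}$ already force $\mbf{P}$ to be the orthogonal projector onto its own range, which is the assertion. I would set $\mathcal{L}:=\{\mbf{P}\mbf{x}:\mbf{x}\in\mathcal{H}\}$, the range of $\mbf{P}$; this is a linear subspace of $\mathcal{H}$ by linearity of $\mbf{P}$, and in the finite-dimensional setting $\mathcal{H}=\mathbb{R}^{d}$ used throughout it is automatically closed (more generally one uses the identification $\mathcal{L}=\{\mbf{y}:\mbf{P}\mbf{y}=\mbf{y}\}=\ker(\mbf{I}-\mbf{P})$ established below). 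The strategy is then to prove two facts: (i) $\mbf{P}$ restricts to the identity on $\mathcal{L}$, and (ii) $\ker(\mbf{P})=\mathcal{L}^{\perp}$. Granting these, the decomposition $\mathcal{H}=\mathcal{L}\oplus\mathcal{L}^{\perp}$ from the preceding theorem lets me write any $\mbf{x}\in\mathcal{H}$ as $\mbf{x}=\mbf{u}+\mbf{v}$ with $\mbf{u}\in\mathcal{L}$ and $\mbf{v}\in\mathcal{L}^{\perp}$, so that $\mbf{P}\mbf{x}=\mbf{P}\mbf{u}+\mbf{P}\mbf{v}=\mbf{u}$ --- exactly the orthogonal projection of $\mbf{x}$ onto $\mathcal{L}$, i.e.\ $\mbf{P}=\mbf{P}_{\mathcal{L}}$.

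Fact (i) uses only idempotency: if $\mbf{y}\in\mathcal{L}$, write $\mbf{y}=\mbf{P}\mbf{x}$ and compute $\mbf{P}\mbf{y}=\mbf{P}\mbf{P}\mbf{x}=\mbf{P}\mbf{x}=\mbf{y}$; in particular $\mathcal{L}=\{\mbf{y}:\mbf{P}\mbf{y}=\mbf{y}\}$, which is the identification quoted above.

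Fact (ii) is where self-adjointness is genuinely needed, and I expect this to be the only delicate point --- an arbitrary (oblique) idempotent fails exactly here, since its kernel need not be orthogonal to its range. For $\ker(\mbf{P})\subseteq\mathcal{L}^{\perp}$: if $\mbf{P}\mbf{z}=\mbf{0}$ then, for every $\mbf{x}\in\mathcal{H}$, the defining property of the adjoint together with $\mbf{P}^{*}=\mbf{P}$ gives
\[
\braket{\mbf{z},\mbf{P}\mbf{x}}=\braket{\mbf{P}\mbf{z},\mbf{x}}=0 ,
\]
so $\mbf{z}\perp\mathcal{L}$. For $\mathcal{L}^{\perp}\subseteq\ker(\mbf{P})$: if $\mbf{z}\in\mathcal{L}^{\perp}$, then using both hypotheses,
\[
\|\mbf{P}\mbf{z}\|^{2}=\braket{\mbf{P}\mbf{z},\mbf{P}\mbf{z}}=\braket{\mbf{z},\mbf{P}^{*}\mbf{P}\mbf{z}}=\braket{\mbf{z},\mbf{P}\mbf{z}}=0 ,
\]
the last equality because $\mbf{P}\mbf{z}\in\mathcal{L}$ while $\mbf{z}\in\mathcal{L}^{\perp}$; hence $\mbf{P}\mbf{z}=\mbf{0}$. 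Combining the two inclusions gives $\ker(\mbf{P})=\mathcal{L}^{\perp}$, which closes the argument as described in the first paragraph.

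For completeness I would also note the converse --- every orthogonal projection $\mbf{P}_{\mathcal{L}}$ is self-adjoint and idempotent --- which is immediate from the same orthogonal decomposition, so that the two conditions actually characterize projection operators. The only real obstacle is Fact (ii): showing $\ker(\mbf{P})$ is precisely $\mathcal{L}^{\perp}$ is the one step that cannot proceed without $\mbf{P}=\mbf{P}^{*}$, and the identity $\|\mbf{P}\mbf{z}\|^{2}=\braket{\mbf{z},\mbf{P}\mbf{z}}$ is its crux.
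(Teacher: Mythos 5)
Your proof is correct, and in fact the paper gives no proof of this statement at all -- it is quoted as standard background, with the reader deferred to \cite{kreyszig1989}; your argument is precisely the classical textbook one. Both inclusions in your Fact (ii) are sound (the identity $\|\mbf{P}\mbf{z}\|^{2}=\braket{\mbf{z},\mbf{P}\mbf{z}}$ is indeed where self-adjointness enters, and idempotency alone gives Fact (i)), and combining them with $\mathcal{H}=\mathcal{L}\oplus\mathcal{L}^{\perp}$ correctly identifies $\mbf{P}$ with the orthogonal projector onto its range, which is exactly what the theorem asserts in the paper's finite-dimensional setting.
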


\begin{proposition}
The map connecting the set of closed subspaces\footnote{In a finite dimensional Hilbert space all subspaces are closed.} of $\mathcal{H}$ and the set of
the corresponding orthogonal projections is bijective. 
\end{proposition}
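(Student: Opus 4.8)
The plan is to exhibit the map explicitly and to construct a two-sided inverse, so that bijectivity follows without having to argue injectivity and surjectivity separately. To a closed subspace $\mathcal{L}\subseteq\mathcal{H}$ I associate the operator $\mbf{P}_{\mathcal{L}}$ defined through the orthogonal decomposition $\mathcal{H}=\mathcal{L}\oplus\mathcal{L}^{\perp}$ supplied by the preceding theorem: every $\mbf{x}\in\mathcal{H}$ writes uniquely as $\mbf{x}=\mbf{u}+\mbf{v}$ with $\mbf{u}\in\mathcal{L}$, $\mbf{v}\in\mathcal{L}^{\perp}$, and I set $\mbf{P}_{\mathcal{L}}\mbf{x}:=\mbf{u}$. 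First I would check that this is well defined (uniqueness of the decomposition), linear, idempotent and self-adjoint, hence a projection in the sense of the earlier characterisation; this is routine bookkeeping. In the reverse direction, to a projection $\mbf{P}$ (self-adjoint and idempotent) I associate its range $\mathcal{L}_{\mbf{P}}:=\operatorname{ran}(\mbf{P})$.

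Next I would verify that the two assignments are mutually inverse. Starting from $\mathcal{L}$, note that $\operatorname{ran}(\mbf{P}_{\mathcal{L}})=\mathcal{L}$: the image lies in $\mathcal{L}$ by construction, and every $\mbf{u}\in\mathcal{L}$ is fixed since $\mbf{P}_{\mathcal{L}}\mbf{u}=\mbf{u}$; hence $\mathcal{L}_{\mbf{P}_{\mathcal{L}}}=\mathcal{L}$. Starting from a projection $\mbf{P}$, I first record that $\operatorname{ran}(\mbf{P})=\ker(\mbf{I}-\mbf{P})$ — indeed $\mbf{x}\in\operatorname{ran}(\mbf{P})$ iff $\mbf{P}\mbf{x}=\mbf{x}$ by idempotency — so $\mathcal{L}_{\mbf{P}}$ is a closed subspace (the kernel of a bounded operator; automatic in finite dimension). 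Writing $\mbf{x}=\mbf{P}\mbf{x}+(\mbf{I}-\mbf{P})\mbf{x}$, we have $\mbf{P}\mbf{x}\in\mathcal{L}_{\mbf{P}}$, and for every $\mbf{w}\in\mathcal{H}$ self-adjointness and idempotency give $\braket{(\mbf{I}-\mbf{P})\mbf{x},\mbf{P}\mbf{w}}=\braket{(\mbf{P}-\mbf{P}^{2})\mbf{x},\mbf{w}}=0$, so $(\mbf{I}-\mbf{P})\mbf{x}\in\mathcal{L}_{\mbf{P}}^{\perp}$. By uniqueness of the orthogonal decomposition this is exactly the splitting that defines $\mbf{P}_{\mathcal{L}_{\mbf{P}}}$, whence $\mbf{P}=\mbf{P}_{\mathcal{L}_{\mbf{P}}}$. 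The two identities $\mathcal{L}_{\mbf{P}_{\mathcal{L}}}=\mathcal{L}$ and $\mbf{P}_{\mathcal{L}_{\mbf{P}}}=\mbf{P}$ show that $\mathcal{L}\mapsto\mbf{P}_{\mathcal{L}}$ and $\mbf{P}\mapsto\mathcal{L}_{\mbf{P}}$ are inverse to each other, so the map is a bijection.

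I expect the only genuinely delicate point to be the surjectivity half — verifying that the range of an abstract self-adjoint idempotent is a closed subspace onto which $\mbf{P}$ is in fact the \emph{orthogonal} projection. This is precisely where self-adjointness (rather than mere idempotency) is used, via the orthogonality computation above; in the finite-dimensional setting in which the paper operates, closedness of the range is free and the argument reduces to that one inner-product identity. Everything else — well-definedness of $\mbf{P}_{\mathcal{L}}$ and its algebraic properties — I would state but not belabor.
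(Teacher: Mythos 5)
Your argument is correct and complete: the two assignments $\mathcal{L}\mapsto\mbf{P}_{\mathcal{L}}$ (via the decomposition $\mathcal{H}=\mathcal{L}\oplus\mathcal{L}^{\perp}$) and $\mbf{P}\mapsto\operatorname{ran}(\mbf{P})$ are shown to be mutually inverse, and the only genuinely nontrivial step --- that a self-adjoint idempotent is the \emph{orthogonal} projection onto its closed range --- is settled by your identity $\braket{(\mbf{I}-\mbf{P})\mbf{x},\mbf{P}\mbf{w}}=\braket{(\mbf{P}-\mbf{P}^{2})\mbf{x},\mbf{w}}=0$, which is the same orthogonality observation the paper records in eq.~(\ref{eq:RangeNullOrth}). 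The paper itself states this proposition without proof, deferring to the cited literature (Kreyszig), and your proof is precisely the standard argument given there, so there is nothing to compare beyond noting full agreement.
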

As a consequence of the idempotent and self-adjoint
properties we have that the range $\mathcal{R}(\mbf{P})$ and the null
space $\mathcal{N}(\mbf{P})$ of $\mbf{P}$ are
orthogonal, namely for any $x,y \in \mathcal{H}$ 
\begin{equation}
\label{eq:RangeNullOrth}
\braket{\mbf{P}x,y-\mbf{P}y}=\braket{\mbf{P}^2x,y-\mbf{P}y}=\braket{\mbf{P}x,(\mbf{P}-\mbf{P}^2)y}=0. 
\end{equation}

The following theorems describe some algebraic properties of projection operators we are going to
exploit. 
\begin{theorem}{\textbf (Product of projections)} 
\label{theo:product_of_projections}
Let $\mbf{P}_1$ and $\mbf{P}_2$ be projections on
$\mathcal{H}$. $\mbf{P}=\mbf{P}_1\mbf{P}_2$ is projection if and only
if $\mbf{P}_1\mbf{P}_2= \mbf{P}_2\mbf{P}_1$. Then $\mbf{P}:\mathcal{H}
\rightarrow \mathcal{L}_{P_1} \cap \mathcal{L}_{P_2}$.  
\end{theorem}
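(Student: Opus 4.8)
The plan is to treat the biconditional and the identification of the range separately, relying on the earlier characterization that a linear operator is an orthogonal projection precisely when it is self-adjoint and idempotent.

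For the ``if'' direction, I would assume $\mbf{P}_1\mbf{P}_2 = \mbf{P}_2\mbf{P}_1$ and verify the two defining properties of $\mbf{P} = \mbf{P}_1\mbf{P}_2$. Self-adjointness is immediate: $\mbf{P}^{*} = (\mbf{P}_1\mbf{P}_2)^{*} = \mbf{P}_2^{*}\mbf{P}_1^{*} = \mbf{P}_2\mbf{P}_1 = \mbf{P}_1\mbf{P}_2 = \mbf{P}$, using that each $\mbf{P}_i$ is self-adjoint and the commutation hypothesis. Idempotency follows by pushing the commutation through the middle factor: $\mbf{P}^2 = \mbf{P}_1(\mbf{P}_2\mbf{P}_1)\mbf{P}_2 = \mbf{P}_1(\mbf{P}_1\mbf{P}_2)\mbf{P}_2 = \mbf{P}_1^2\mbf{P}_2^2 = \mbf{P}_1\mbf{P}_2 = \mbf{P}$. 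By the characterization theorem, $\mbf{P}$ is then a projection. For the ``only if'' direction I would assume $\mbf{P} = \mbf{P}_1\mbf{P}_2$ is a projection, hence self-adjoint, and simply read off $\mbf{P}_1\mbf{P}_2 = \mbf{P} = \mbf{P}^{*} = (\mbf{P}_1\mbf{P}_2)^{*} = \mbf{P}_2\mbf{P}_1$, which is the asserted commutation.

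To identify $\mathcal{R}(\mbf{P})$ with $\mathcal{L}_{P_1}\cap\mathcal{L}_{P_2}$, I would prove two inclusions. For ``$\subseteq$'', every $\mbf{P}\mbf{x} = \mbf{P}_1(\mbf{P}_2\mbf{x})$ lies in $\mathcal{R}(\mbf{P}_1) = \mathcal{L}_{P_1}$, and by commutation $\mbf{P}\mbf{x} = \mbf{P}_2(\mbf{P}_1\mbf{x}) \in \mathcal{R}(\mbf{P}_2) = \mathcal{L}_{P_2}$. For ``$\supseteq$'', I would first record the elementary fact that an orthogonal projection restricts to the identity on its own subspace: if $\mbf{z}\in\mathcal{L}_{P_i}=\mathcal{R}(\mbf{P}_i)$, write $\mbf{z}=\mbf{P}_i\mbf{w}$ and use idempotency to get $\mbf{P}_i\mbf{z}=\mbf{P}_i^2\mbf{w}=\mbf{P}_i\mbf{w}=\mbf{z}$. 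Then for $\mbf{z}\in\mathcal{L}_{P_1}\cap\mathcal{L}_{P_2}$ we obtain $\mbf{P}\mbf{z}=\mbf{P}_1\mbf{P}_2\mbf{z}=\mbf{P}_1\mbf{z}=\mbf{z}$, so $\mbf{z}\in\mathcal{R}(\mbf{P})$, giving $\mathcal{R}(\mbf{P})=\mathcal{L}_{P_1}\cap\mathcal{L}_{P_2}$.

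There is no serious obstacle here; the proof is essentially a bookkeeping exercise with the self-adjoint and idempotent laws. The only points needing a little care are invoking the self-adjoint-plus-idempotent characterization in the correct direction for each half of the biconditional, and explicitly spelling out that a projector fixes the vectors of its range before proving the reverse inclusion for the range. I would present all of these as short explicit computations rather than appealing to them implicitly.
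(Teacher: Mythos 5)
Your proof is correct and complete: both directions of the biconditional and both inclusions for the range follow exactly from the self-adjoint-plus-idempotent characterization that the paper itself adopts. The paper does not supply its own proof of this theorem (it is quoted as background and referred to Kreyszig), and your argument is the standard one that reference gives, so there is nothing to add or correct.
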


\begin{theorem}{\textbf (Sum of projections)} 
Let $\mbf{P}_1$ and $\mbf{P}_2$ be projections on
$\mathcal{H}$. $\mbf{P}=\mbf{P}_1+\mbf{P}_2$ is projection if and only
if $\mathcal{L}_{P_1} \perp \mathcal{L}_{P_2}$. Then $\mbf{P}:\mathcal{H}
\rightarrow \mathcal{L}_{P_1} \oplus \mathcal{L}_{P_2}$.  
\end{theorem}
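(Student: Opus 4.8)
The plan is to lean on the characterisation stated just above (a linear operator is a projection iff it is self-adjoint and idempotent). Self-adjointness of $\mbf{P}=\mbf{P}_1+\mbf{P}_2$ is immediate from $\mbf{P}_i^{*}=\mbf{P}_i$, so the whole content of the first assertion is idempotency. Expanding,
\[
\mbf{P}^2=\mbf{P}_1^2+\mbf{P}_1\mbf{P}_2+\mbf{P}_2\mbf{P}_1+\mbf{P}_2^2=\mbf{P}_1+\mbf{P}_2+\bigl(\mbf{P}_1\mbf{P}_2+\mbf{P}_2\mbf{P}_1\bigr),
\]
so $\mbf{P}^2=\mbf{P}$ holds if and only if $\mbf{P}_1\mbf{P}_2+\mbf{P}_2\mbf{P}_1=\mbf{0}$. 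It therefore suffices to establish the cycle of implications
\[
\mathcal{L}_{P_1}\perp\mathcal{L}_{P_2}\ \Longleftrightarrow\ \mbf{P}_1\mbf{P}_2=\mbf{0}\ \Longrightarrow\ \mbf{P}_1\mbf{P}_2+\mbf{P}_2\mbf{P}_1=\mbf{0}\ \Longrightarrow\ \mbf{P}_1\mbf{P}_2=\mbf{0}.
\]

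For the first equivalence I would use that, for an orthogonal projection, $\mathcal{N}(\mbf{P}_1)=\mathcal{R}(\mbf{P}_1)^{\perp}=\mathcal{L}_{P_1}^{\perp}$ — a consequence of $\mathcal{H}=\mathcal{L}_{P_1}\oplus\mathcal{L}_{P_1}^{\perp}$ together with the orthogonality of range and null space recorded in \eqref{eq:RangeNullOrth}. Then $\mbf{P}_1\mbf{P}_2=\mbf{0}$ is exactly the statement $\mathcal{R}(\mbf{P}_2)\subseteq\mathcal{N}(\mbf{P}_1)=\mathcal{L}_{P_1}^{\perp}$, i.e. $\mathcal{L}_{P_2}\perp\mathcal{L}_{P_1}$; conversely $\mathcal{L}_{P_1}\perp\mathcal{L}_{P_2}$ forces $\mbf{P}_2x\in\mathcal{L}_{P_2}\subseteq\mathcal{N}(\mbf{P}_1)$ for every $x$, hence $\mbf{P}_1\mbf{P}_2=\mbf{0}$. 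Moreover $\mbf{P}_1\mbf{P}_2=\mbf{0}$ gives $\mbf{P}_2\mbf{P}_1=(\mbf{P}_1\mbf{P}_2)^{*}=\mbf{0}$, so the middle implication is free and with it the direction ``$\mathcal{L}_{P_1}\perp\mathcal{L}_{P_2}\Rightarrow\mbf{P}$ is a projection''.

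The one genuinely non-routine step — and the place I expect to do the work — is the last implication $\mbf{P}_1\mbf{P}_2+\mbf{P}_2\mbf{P}_1=\mbf{0}\Rightarrow\mbf{P}_1\mbf{P}_2=\mbf{0}$, since a priori the two summands could cancel without vanishing individually. The trick is to multiply the identity on the left by $\mbf{P}_1$ and, separately, on the right by $\mbf{P}_1$, using $\mbf{P}_1^2=\mbf{P}_1$:
\[
\mbf{P}_1\mbf{P}_2+\mbf{P}_1\mbf{P}_2\mbf{P}_1=\mbf{0},\qquad \mbf{P}_1\mbf{P}_2\mbf{P}_1+\mbf{P}_2\mbf{P}_1=\mbf{0}.
\]
Subtracting yields $\mbf{P}_1\mbf{P}_2=\mbf{P}_2\mbf{P}_1$, and substituting this back into $\mbf{P}_1\mbf{P}_2+\mbf{P}_2\mbf{P}_1=\mbf{0}$ gives $2\mbf{P}_1\mbf{P}_2=\mbf{0}$, hence $\mbf{P}_1\mbf{P}_2=\mbf{0}$ (we work over $\mathbb{R}$). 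This closes the first assertion; as a by-product we also know $\mbf{P}_1\mbf{P}_2=\mbf{P}_2\mbf{P}_1=\mbf{0}$.

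For the range claim, orthogonality of $\mathcal{L}_{P_1}$ and $\mathcal{L}_{P_2}$ gives $\mathcal{L}_{P_1}\cap\mathcal{L}_{P_2}=\{\mbf{0}\}$, so the sum $\mathcal{L}_{P_1}\oplus\mathcal{L}_{P_2}$ is genuinely direct. The inclusion $\mathcal{R}(\mbf{P})\subseteq\mathcal{L}_{P_1}+\mathcal{L}_{P_2}$ is clear from $\mbf{P}x=\mbf{P}_1x+\mbf{P}_2x$. For the reverse inclusion take $z=z_1+z_2$ with $z_i\in\mathcal{L}_{P_i}$; using $\mbf{P}_iz_i=z_i$ and $\mbf{P}_1z_2=\mbf{P}_2z_1=\mbf{0}$ (from $\mbf{P}_1\mbf{P}_2=\mbf{P}_2\mbf{P}_1=\mbf{0}$) we get $\mbf{P}z=z$, so $z\in\mathcal{R}(\mbf{P})$. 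Hence $\mathcal{R}(\mbf{P})=\mathcal{L}_{P_1}\oplus\mathcal{L}_{P_2}$, i.e. $\mbf{P}$ is the orthogonal projection onto $\mathcal{L}_{P_1}\oplus\mathcal{L}_{P_2}$.
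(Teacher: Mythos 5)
Your proposal is correct. Note that the paper does not prove this statement at all: it is quoted as standard background on projection operators, with the proofs deferred to the cited textbook (Kreyszig), so there is no in-paper argument to compare against. Your route is essentially the classical textbook one: self-adjointness is free, idempotency reduces to $\mbf{P}_1\mbf{P}_2+\mbf{P}_2\mbf{P}_1=\mbf{0}$, and the only delicate point --- that this anticommutation identity already forces $\mbf{P}_1\mbf{P}_2=\mbf{0}$ rather than mere cancellation --- you resolve correctly by multiplying by $\mbf{P}_1$ on the left and on the right and subtracting, which gives commutativity and then $2\mbf{P}_1\mbf{P}_2=\mbf{0}$. The identification of the range with $\mathcal{L}_{P_1}\oplus\mathcal{L}_{P_2}$ is also argued correctly, using $\mbf{P}_1\mbf{P}_2=\mbf{P}_2\mbf{P}_1=\mbf{0}$ to verify $\mbf{P}z=z$ on the direct sum. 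No gaps.
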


\begin{theorem}{\textbf (Partial order)} 
Let $\mbf{P}_1$ and $\mbf{P}_2$ be projections on
$\mathcal{H}$, and $\mathcal{N}(\mbf{P}_1)$ and
$\mathcal{N}(\mbf{P}_2)$ the corresponding null spaces. Then the
following statements are equivalent.
\begin{equation}
\begin{array}{ll}
\mbf{P}_1\mbf{P}_2 &= \mbf{P}_2\mbf{P}_1 =\mbf{P}_1, \\
\mathcal{L}_{P_1} & \subseteq \mathcal{L}_{P_2}, \\ 
\mathcal{N}(\mbf{P}_1) & \supseteq \mathcal{N}(\mbf{P}_2), \\
||\mbf{P}_1\mbf{x}|| & \le ||\mbf{P}_2\mbf{x}||, \\
\mbf{P}_1 & \preceq \mbf{P}_2. \\ 
\end{array}
\end{equation}
\end{theorem}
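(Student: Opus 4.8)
## Proof Proposal for the Partial Order Theorem

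The plan is to prove the cycle of implications $(1)\Rightarrow(2)\Rightarrow(3)\Rightarrow(4)\Rightarrow(5)\Rightarrow(1)$, since each individual step is short once set up correctly, and a cyclic argument avoids proving all $\binom{5}{2}$ equivalences separately. Throughout I would use freely that a projection $\mbf{P}$ is self-adjoint and idempotent, that $\mathcal{L}_{P}=\mathcal{R}(\mbf{P})$, that $\mathcal{N}(\mbf{P})=\mathcal{R}(\mbf{P})^{\perp}=\mathcal{L}_{P}^{\perp}$, and that $\|\mbf{P}\mbf{x}\|^{2}=\braket{\mbf{P}\mbf{x},\mbf{x}}$ (which follows from $\braket{\mbf{P}\mbf{x},\mbf{x}}=\braket{\mbf{P}^{2}\mbf{x},\mbf{x}}=\braket{\mbf{P}\mbf{x},\mbf{P}\mbf{x}}$). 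The geometric content of $(4)$ is that $\mbf{P}_2$ never shrinks a vector more than $\mbf{P}_1$ does, and $(5)$ is just the restatement of $(4)$ via the partial order $\preceq$ defined earlier in the excerpt, so the pair $(4)\Leftrightarrow(5)$ is essentially a notational identity.

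For the first implication $(1)\Rightarrow(2)$: assuming $\mbf{P}_2\mbf{P}_1=\mbf{P}_1$, take any $\mbf{z}\in\mathcal{L}_{P_1}=\mathcal{R}(\mbf{P}_1)$, write $\mbf{z}=\mbf{P}_1\mbf{x}$, and compute $\mbf{P}_2\mbf{z}=\mbf{P}_2\mbf{P}_1\mbf{x}=\mbf{P}_1\mbf{x}=\mbf{z}$, so $\mbf{z}\in\mathcal{R}(\mbf{P}_2)=\mathcal{L}_{P_2}$. The implication $(2)\Rightarrow(3)$ is immediate from taking orthogonal complements, using $\mathcal{N}(\mbf{P}_i)=\mathcal{L}_{P_i}^{\perp}$ and the order-reversing property of $\perp$. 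For $(3)\Rightarrow(4)$: given $\mathcal{N}(\mbf{P}_1)\supseteq\mathcal{N}(\mbf{P}_2)$, decompose an arbitrary $\mbf{x}$ as $\mbf{x}=\mbf{P}_2\mbf{x}+(\mbf{I}-\mbf{P}_2)\mbf{x}$ with the second summand in $\mathcal{N}(\mbf{P}_2)\subseteq\mathcal{N}(\mbf{P}_1)$; applying $\mbf{P}_1$ gives $\mbf{P}_1\mbf{x}=\mbf{P}_1\mbf{P}_2\mbf{x}$, whence $\|\mbf{P}_1\mbf{x}\|=\|\mbf{P}_1(\mbf{P}_2\mbf{x})\|\le\|\mbf{P}_2\mbf{x}\|$, the last step because a projection is norm-nonincreasing. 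Then $(4)\Leftrightarrow(5)$ follows by squaring and using $\|\mbf{P}_i\mbf{x}\|^{2}=\braket{\mbf{P}_i\mbf{x},\mbf{x}}$, which turns $\|\mbf{P}_1\mbf{x}\|\le\|\mbf{P}_2\mbf{x}\|$ for all $\mbf{x}$ exactly into $\mbf{P}_1\preceq\mbf{P}_2$.

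The step I expect to be the main obstacle is closing the loop, $(5)\Rightarrow(1)$, since here one must extract an operator identity from an inequality of quadratic forms. The idea is: from $\mbf{P}_1\preceq\mbf{P}_2$ we get $\braket{(\mbf{I}-\mbf{P}_2)\mbf{x},\mbf{x}}\le\braket{(\mbf{I}-\mbf{P}_1)\mbf{x},\mbf{x}}$, i.e. $\|(\mbf{I}-\mbf{P}_2)\mbf{x}\|\le\|(\mbf{I}-\mbf{P}_1)\mbf{x}\|$ for all $\mbf{x}$ (using that $\mbf{I}-\mbf{P}_i$ is again a projection). Now apply this to $\mbf{x}\in\mathcal{N}(\mbf{P}_1)$: then $(\mbf{I}-\mbf{P}_1)\mbf{x}=\mbf{x}$ carries no information, so instead apply it to $\mbf{x}\in\mathcal{L}_{P_1}=\mathcal{R}(\mbf{P}_1)$, where $(\mbf{I}-\mbf{P}_1)\mbf{x}=\mbf{0}$, forcing $(\mbf{I}-\mbf{P}_2)\mbf{x}=\mbf{0}$, i.e. $\mbf{P}_2\mbf{x}=\mbf{x}$; hence $\mathcal{L}_{P_1}\subseteq\mathcal{L}_{P_2}$, which gives $\mbf{P}_2\mbf{P}_1\mbf{x}=\mbf{P}_2\mbf{P}_1\mbf{x}$: more precisely, for any $\mbf{x}$, $\mbf{P}_1\mbf{x}\in\mathcal{L}_{P_1}\subseteq\mathcal{L}_{P_2}$ so $\mbf{P}_2\mbf{P}_1\mbf{x}=\mbf{P}_1\mbf{x}$, giving $\mbf{P}_2\mbf{P}_1=\mbf{P}_1$; taking adjoints and using self-adjointness of both projections yields $\mbf{P}_1\mbf{P}_2=\mbf{P}_1$ as well, which is precisely $(1)$. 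An alternative, perhaps cleaner, route to $(5)\Rightarrow(1)$ is to invoke the (standard) fact that $\mbf{0}\preceq\mbf{P}_2-\mbf{P}_1$ together with both operators being projections forces $\mbf{P}_2-\mbf{P}_1$ to itself be a projection onto $\mathcal{L}_{P_2}\cap\mathcal{L}_{P_1}^{\perp}$, but since that auxiliary fact is not stated in the excerpt I would prefer the self-contained complement-projection argument above.
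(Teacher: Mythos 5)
Your proof is correct. Note, however, that the paper does not prove this theorem at all: it is stated as standard background on projection operators, with the proofs deferred to the cited textbook (Kreyszig, 1989), so there is no in-paper argument to compare against. Your cyclic chain $(1)\Rightarrow(2)\Rightarrow(3)\Rightarrow(4)\Leftrightarrow(5)\Rightarrow(1)$ is the standard textbook route and every step checks out: the identification $\mathcal{N}(\mbf{P})=\mathcal{L}_{\mbf{P}}^{\perp}$ and the identity $\|\mbf{P}\mbf{x}\|^{2}=\braket{\mbf{P}\mbf{x},\mbf{x}}$ do all the work, the decomposition $\mbf{x}=\mbf{P}_2\mbf{x}+(\mbf{I}-\mbf{P}_2)\mbf{x}$ together with $\|\mbf{P}_1\|\le 1$ handles $(3)\Rightarrow(4)$, and your closing step $(5)\Rightarrow(1)$ via the complementary projections $\mbf{I}-\mbf{P}_i$ (forcing $\mathcal{L}_{P_1}\subseteq\mathcal{L}_{P_2}$ and then $\mbf{P}_2\mbf{P}_1=\mbf{P}_1$, with the adjoint giving $\mbf{P}_1\mbf{P}_2=\mbf{P}_1$) is self-contained and uses only facts stated in the paper's background section. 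The only blemish is the typographical slip ``which gives $\mbf{P}_2\mbf{P}_1\mbf{x}=\mbf{P}_2\mbf{P}_1\mbf{x}$'' (a tautology), but you immediately restate the intended claim correctly, so the argument is unaffected.
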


\begin{theorem}{\textbf (Difference of projections)} 
Let $\mbf{P}_1$ and $\mbf{P}_2$ be projections on
$\mathcal{H}$. $\mbf{P}=\mbf{P}_2-\mbf{P}_1$ is projection if and only
$\mathcal{L}_{P_1} \subseteq \mathcal{L}_{P_2}$. Then $\mbf{P}:\mathcal{H}
\rightarrow \mathcal{L}_{P}$, where $\mathcal{L}_{P_2} =
\mathcal{L}_{P_1} \oplus \mathcal{L}_{P}$, namely $\mathcal{L}_{P}$ is
the orthogonal complement of $\mathcal{L}_{P_1}$ in $\mathcal{L}_{P_2}$.   
\end{theorem}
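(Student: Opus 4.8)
The plan is to separate the statement into (i) the equivalence between $\mbf{P}=\mbf{P}_2-\mbf{P}_1$ being a projection and the containment $\mathcal{L}_{P_1}\subseteq\mathcal{L}_{P_2}$, and (ii) the identification of the range $\mathcal{L}_{P}$. For (i), note first that $\mbf{P}$ is self-adjoint automatically, being the difference of self-adjoint operators; hence by the characterization of projections, $\mbf{P}$ is a projection if and only if it is idempotent, $\mbf{P}^2=\mbf{P}$. I would expand $\mbf{P}^2=(\mbf{P}_2-\mbf{P}_1)^2$ and use $\mbf{P}_1^2=\mbf{P}_1$, $\mbf{P}_2^2=\mbf{P}_2$ to rewrite $\mbf{P}^2=\mbf{P}$ as the symmetric identity $\mbf{P}_1\mbf{P}_2+\mbf{P}_2\mbf{P}_1=2\mbf{P}_1$.

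The key step is then to show this symmetric identity is equivalent to the stronger pair $\mbf{P}_1\mbf{P}_2=\mbf{P}_2\mbf{P}_1=\mbf{P}_1$. The implication ``$\Leftarrow$'' is immediate. For ``$\Rightarrow$'', multiply the identity on the left by $\mbf{P}_1$ and, separately, on the right by $\mbf{P}_1$, simplifying with $\mbf{P}_1^2=\mbf{P}_1$; both results contain the common term $\mbf{P}_1\mbf{P}_2\mbf{P}_1$, and subtracting them yields $\mbf{P}_1\mbf{P}_2=\mbf{P}_2\mbf{P}_1$. Substituting commutativity back into the symmetric identity gives $2\mbf{P}_1\mbf{P}_2=2\mbf{P}_1$, hence $\mbf{P}_1\mbf{P}_2=\mbf{P}_2\mbf{P}_1=\mbf{P}_1$. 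By the Partial order theorem, this last condition is exactly equivalent to $\mathcal{L}_{P_1}\subseteq\mathcal{L}_{P_2}$, which completes (i).

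For (ii), assume $\mathcal{L}_{P_1}\subseteq\mathcal{L}_{P_2}$, so by the Partial order theorem $\mbf{P}_1\mbf{P}_2=\mbf{P}_2\mbf{P}_1=\mbf{P}_1$ and $\mbf{P}:=\mbf{P}_2-\mbf{P}_1$ is a projection. A direct computation gives $\mbf{P}_1\mbf{P}=\mbf{P}_1\mbf{P}_2-\mbf{P}_1^2=\mbf{P}_1-\mbf{P}_1=\mbf{0}$ (and symmetrically $\mbf{P}\mbf{P}_1=\mbf{0}$), so $\mathcal{R}(\mbf{P})=\mathcal{L}_{P}$ lies in $\mathcal{N}(\mbf{P}_1)=\mathcal{L}_{P_1}^{\perp}$; that is, $\mathcal{L}_{P}\perp\mathcal{L}_{P_1}$. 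Now $\mbf{P}_1$ and $\mbf{P}$ are projections with orthogonal ranges and $\mbf{P}_1+\mbf{P}=\mbf{P}_2$, so the Sum of projections theorem yields that $\mbf{P}_2$ projects onto $\mathcal{L}_{P_1}\oplus\mathcal{L}_{P}$; hence $\mathcal{L}_{P_2}=\mathcal{L}_{P_1}\oplus\mathcal{L}_{P}$, and since this sum is orthogonal, $\mathcal{L}_{P}$ is precisely the orthogonal complement of $\mathcal{L}_{P_1}$ within $\mathcal{L}_{P_2}$.

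I expect the only genuinely delicate point to be the algebraic passage from the symmetric relation $\mbf{P}_1\mbf{P}_2+\mbf{P}_2\mbf{P}_1=2\mbf{P}_1$ to commutativity and then to $\mbf{P}_1\mbf{P}_2=\mbf{P}_1$; the rest is either routine expansion or a direct invocation of the Partial order and Sum of projections theorems already stated. A workable alternative to the multiplication trick would be to evaluate the symmetric identity separately on vectors of $\mathcal{L}_{P_1}$ and of $\mathcal{L}_{P_1}^{\perp}$, but the left/right multiplication by $\mbf{P}_1$ is the cleanest route.
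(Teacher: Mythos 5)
Your proof is correct. Note that the paper itself does not prove this statement: it is quoted as standard background on projection operators, with the proofs deferred to the cited functional-analysis literature (Kreyszig), so there is no in-paper argument to compare against; your write-up is essentially the classical textbook proof. The one delicate step you flagged — passing from the symmetric identity $\mbf{P}_1\mbf{P}_2+\mbf{P}_2\mbf{P}_1=2\mbf{P}_1$ to commutativity by multiplying on the left and on the right by $\mbf{P}_1$ and subtracting — is handled correctly, and the range identification via $\mbf{P}_1\mbf{P}=\mbf{P}\mbf{P}_1=\mbf{0}$ together with the Sum of projections theorem is sound. A small simplification for the forward direction: if $\mbf{P}_2-\mbf{P}_1$ is a projection it is automatically positive, since $\braket{(\mbf{P}_2-\mbf{P}_1)\mbf{x},\mbf{x}}=\|(\mbf{P}_2-\mbf{P}_1)\mbf{x}\|^2\ge 0$, so $\mbf{P}_1\preceq\mbf{P}_2$ and the Partial order theorem gives $\mathcal{L}_{P_1}\subseteq\mathcal{L}_{P_2}$ directly, bypassing the algebraic manipulation; your route and this one are both legitimate, yours having the merit of producing the identity $\mbf{P}_1\mbf{P}_2=\mbf{P}_2\mbf{P}_1=\mbf{P}_1$ explicitly, which you then reuse in part (ii).
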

From the theorems above we can derive a simple corollary: if
$\mathcal{L}$ is a subspace, then the projection into its complement is
equal to $\mbf{P}_{\mathcal{L}^{\perp}} = \mbf{I} -  \mbf{P}_{\mathcal{L}}$.

\begin{theorem}{\textbf (Monotone increasing sequence)} 
Let $(\mbf{P}_n)$ be monotone increasing sequence of projections defined
on $\mathcal{H}$. Then:
\begin{enumerate}
\item $(\mbf{P}_n)$ is strongly operator convergent, and the limit
  $\mbf{P}$, $\mbf{P}_n \rightarrow \mbf{P}$, is a projection. 
\item $\mbf{P}: \mathcal{H} \rightarrow \cup_{n=1}^{\infty} \mathcal{L}_{\mbf{P}_n}$. 
\item $\mathcal{N}(\mbf{P}) = \cap_{n=1}^{\infty}\mathcal{N}(\mbf{P}_n)$. 
\end{enumerate}

\end{theorem}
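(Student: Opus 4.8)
The plan is to construct the limit operator pointwise, then show it inherits self-adjointness and idempotency from the $\mbf{P}_n$, and finally read off its range and null space from the structural theorems already proved (Difference of projections, Partial order). \textbf{Step 1 (the strong limit exists).} Fix $\mbf{x}\in\mathcal{H}$. Monotonicity means $\mbf{P}_n\preceq\mbf{P}_{n+1}$, so by the Partial order theorem $\|\mbf{P}_n\mbf{x}\|$ is non-decreasing in $n$; since each $\mbf{P}_n$ is a projection its range and kernel are orthogonal (\eqref{eq:RangeNullOrth}), whence $\|\mbf{P}_n\mbf{x}\|\le\|\mbf{x}\|$. So $(\|\mbf{P}_n\mbf{x}\|)_n$ converges and in particular is Cauchy. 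For $m>n$ the Difference of projections theorem gives that $\mbf{P}_m-\mbf{P}_n$ is again a projection (here $\mathcal{L}_{P_n}\subseteq\mathcal{L}_{P_m}$), so
\[
\|\mbf{P}_m\mbf{x}-\mbf{P}_n\mbf{x}\|^2=\braket{(\mbf{P}_m-\mbf{P}_n)\mbf{x},\mbf{x}}=\|\mbf{P}_m\mbf{x}\|^2-\|\mbf{P}_n\mbf{x}\|^2\longrightarrow 0 .
\]
Hence $(\mbf{P}_n\mbf{x})_n$ is Cauchy in $\mathcal{H}$, and by completeness it converges; define $\mbf{P}\mbf{x}:=\lim_n\mbf{P}_n\mbf{x}$. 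Linearity and the bound $\|\mbf{P}\mbf{x}\|\le\|\mbf{x}\|$ pass to the limit, so $\mbf{P}$ is a bounded linear operator and $\mbf{P}_n\to\mbf{P}$ strongly.

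\textbf{Step 2 ($\mbf{P}$ is a projection).} By the characterization of projections it suffices to verify $\mbf{P}=\mbf{P}^*$ and $\mbf{P}\mbf{P}=\mbf{P}$. Self-adjointness follows by letting $n\to\infty$ in $\braket{\mbf{P}_n\mbf{x},\mbf{y}}=\braket{\mbf{x},\mbf{P}_n\mbf{y}}$. For idempotency, fix $m$; for every $n\ge m$ the Partial order theorem gives $\mbf{P}_m\mbf{P}_n=\mbf{P}_m$, so by continuity of the fixed bounded operator $\mbf{P}_m$,
\[
\mbf{P}_m\mbf{P}\mbf{x}=\lim_{n\to\infty}\mbf{P}_m\mbf{P}_n\mbf{x}=\mbf{P}_m\mbf{x}.
\]
Now letting $m\to\infty$ yields $\mbf{P}\mbf{P}\mbf{x}=\mbf{P}\mbf{x}$. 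This establishes item~1.

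\textbf{Step 3 (range and null space).} Write $\mathcal{M}=\overline{\bigcup_n\mathcal{L}_{\mbf{P}_n}}$ (the closure is superfluous in the finite-dimensional setting of the paper, where the non-decreasing integers $\dim\mathcal{L}_{\mbf{P}_n}$ stabilize, so the chain of subspaces is eventually constant). Each $\mbf{P}_n\mbf{x}\in\mathcal{L}_{\mbf{P}_n}\subseteq\mathcal{M}$ and $\mathcal{M}$ is closed, so $\mbf{P}\mbf{x}\in\mathcal{M}$, i.e. $\mathcal{R}(\mbf{P})\subseteq\mathcal{M}$. Conversely, if $\mbf{z}\in\mathcal{L}_{\mbf{P}_n}$ then for all $m\ge n$, $\mbf{P}_m\mbf{z}=\mbf{P}_m\mbf{P}_n\mbf{z}=\mbf{P}_n\mbf{z}=\mbf{z}$ (Partial order theorem), hence $\mbf{P}\mbf{z}=\mbf{z}$ and $\mbf{z}\in\mathcal{R}(\mbf{P})$; since $\mathcal{R}(\mbf{P})$ is closed, $\mathcal{M}\subseteq\mathcal{R}(\mbf{P})$, giving item~2. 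Item~3 is then immediate: for any projection the null space is the orthogonal complement of the range (\eqref{eq:RangeNullOrth}), so $\mathcal{N}(\mbf{P})=\mathcal{R}(\mbf{P})^{\perp}=\mathcal{M}^{\perp}=\bigcap_n\mathcal{L}_{\mbf{P}_n}^{\perp}=\bigcap_n\mathcal{N}(\mbf{P}_n)$.

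\textbf{Main obstacle.} The delicate step is the idempotency in Step~2: one may not simply write $\mbf{P}^2=(\lim_n\mbf{P}_n)(\lim_n\mbf{P}_n)=\lim_n\mbf{P}_n^2$, since operator multiplication is only separately, not jointly, continuous for the strong operator topology; the argument must be routed through the exact identities $\mbf{P}_m\mbf{P}_n=\mbf{P}_m$ ($n\ge m$) and two nested limits. A secondary subtlety is that in infinite dimensions one genuinely needs the closure in item~2 and completeness of $\mathcal{H}$ in Step~1 — in the paper's setting $\mathcal{H}=\mathbb{R}^d$ both points are automatic because the subspace chain is eventually constant, in which case $(\mbf{P}_n)$ is itself eventually constant and the whole statement is essentially trivial.
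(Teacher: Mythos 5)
Your proof is correct. Note that the paper does not supply its own proof of this theorem: it is stated as background material with the proofs explicitly deferred to the cited reference (Kreyszig, 1989), and your argument is essentially that standard textbook proof --- telescoping $\|\mbf{P}_m\mbf{x}\|^2-\|\mbf{P}_n\mbf{x}\|^2$ via the Difference-of-projections theorem to get strong convergence, the nested-limit identity $\mbf{P}_m\mbf{P}_n=\mbf{P}_m$ (for $n\ge m$) to avoid the failure of joint strong continuity of operator multiplication, and then reading off the range and null space. Your observation that item~2 should really read the closure $\overline{\bigcup_{n}\mathcal{L}_{\mbf{P}_n}}$ in infinite dimensions (superfluous in the paper's finite-dimensional setting, where the chain of subspaces stabilizes) is an accurate sharpening of the statement as printed.
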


If $\mbf{S}$ is a self-adjoint operator and $\mbf{P}$ is a projection
into the range of $\mbf{S}$ then $\mbf{S}\mbf{P}=\mbf{P}\mbf{S}$, see
\cite{JBConway1997} for further details.  

Let $\mbf{I}$ be projection into the entire space, and $\mbf{0}$ its
complement.  If $\mbf{0}\le \mbf{S} \le \mbf{I}$, and $\mbf{T}\ge \mbf{0}$
operators. If $\mbf{P}$ is a projection into $\mbf{S}+\mbf{T}$ then
$\mbf{P}$ commutes both $\mbf{S}$ and $\mbf{T}$. See in
\cite{Hayashi2002}.




\subsection{Matrix representation of projectors}
\label{sec:matrix_represenation}

If a basis of the Hilbert space $\mathcal{H}$ is fixed, then every
linear operator acting on $\mathcal{H}$ can be
represented by a matrix. Let the subspace $\mathcal{L}$ 
of $\mathcal{H}$ be spanned by the vectors $\mbf{a}_1,\dots,\mbf{a}_k$ of
$\mathcal{H}$. Let us construct a matrix $\mbf{A}$ whose columns are equal to
the vectors $\mbf{a}_1,\dots,\mbf{a}_k$. Here the
linear independence of those vectors is not assumed. 
The corresponding subspace is denoted by $\mathcal{L}_{A}$  
  
The matrix representing the orthogonal projection
operator into to subspace $\mathcal{L}_{A}$ can be expressed by a
well-known minimization problem \cite{Golub2013},
\begin{equation}
\label{eq:MinOrthProj}
\arg \min_{\mbf{w}} \|\mbf{x}-\mbf{A}\mbf{w}\|^2 =
\mbf{w}^{*} = (\mbf{A}^{T}\mbf{A})^{+}\mbf{A}^{T}\mbf{x},
\end{equation}
where $^{+}$ denotes the Moore-Penrose pseudo-inverse.
Based on eq. (\ref{eq:RangeNullOrth}) the vector $\mbf{A}^{T}\mbf{w}^{*}$
is the orthogonal projection of $\mbf{x}$ into $\mathcal{L}$. The
orthogonal projection of $\mbf{x}$ is equal to  
$
\mbf{P}_{A}\mbf{x}=\mbf{A}(\mbf{A}^{T}\mbf{A})^{+}\mbf{A}^{T}\mbf{x}$.
Since this is true for any $\mbf{x}\in \mathcal{H}$, the matrix
representation of the orthogonal projection
operator $\mbf{P}_{A}$ is given by
\begin{equation}
\label{eq:projoperator}
\mbf{P}_{A}=\mbf{A}(\mbf{A}^{T}\mbf{A})^{+}\mbf{A}^{T}.
\end{equation}

This formula can be simplified by exploiting the properties of the
Moore-Penrose pseudo-inverse, see for example~\cite{ABenIsrael2003}, via the singular value decomposition
$\mbf{U}_{A}\mbf{S}_{A}\mbf{V}_{A}^{T}$ of the matrix $\mbf{A}$. Here we assume that the matrix $\mbf{A} \in \mathbb{R}^{m \times n_{A}}$,
$m>n_{A}$, and $\mbf{V}_{A}$ is a square matrix, but $\mbf{U}_{A}$ contains only those left singular vectors where the corresponding singular values are not equal to zero. We have 
\begin{equation}
\label{eq:proj-singular-repr}
\fbox{$ \mbf{P}_{A} $} =\mbf{A}(\mbf{A}^{T}\mbf{A})^{+}\mbf{A}^{T} =
\mbf{A}\mbf{A}^{+} =
\mbf{U}_{A}\mbf{S}_{A}\mbf{V}_{A}^{T}\mbf{V}_{A}\mbf{S}_{A}^{+}\mbf{U}_{A}^{T}
= \fbox{$ \mbf{U}_{A}\mbf{U}_{A}^{T} $}.
\end{equation}
This representation of the projection operator plays a central role in
our variable selection algorithm.  
The following proposition ensures that the projection operator does not depend on its
representation. 
\begin{proposition}
\label{prop_unique_represenation}
Assume that two different matrices $\mbf{A}$ and $\mbf{B}$ span the
same subspace $\mathcal{L}$ of dimension $k$. Then the two representations
$\mbf{P}_{A}=\mbf{U}_{A}\mbf{U}_{A}^{T}$ and
$\mbf{P}_{B}=\mbf{U}_{B}\mbf{U}_{B}^{T}$ yield the same projection
operator.  
\end{proposition}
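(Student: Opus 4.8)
The plan is to show that $\mathbf{U}_A\mathbf{U}_A^T$ depends only on the subspace $\mathcal{L}$, not on the spanning matrix, by identifying $\mathbf{U}_A\mathbf{U}_A^T$ as \emph{the} orthogonal projector onto $\mathcal{L}$ and invoking uniqueness. First I would recall from eq.~(\ref{eq:proj-singular-repr}) that $\mathbf{U}_A\mathbf{U}_A^T = \mathbf{A}(\mathbf{A}^T\mathbf{A})^{+}\mathbf{A}^T$, so it suffices to argue the right-hand side is an orthogonal projector onto $\mathcal{L}_A = \mathcal{L}$. This follows from the characterization already stated in the excerpt: one checks $\mathbf{U}_A\mathbf{U}_A^T$ is self-adjoint (immediate, since $(\mathbf{U}_A\mathbf{U}_A^T)^T = \mathbf{U}_A\mathbf{U}_A^T$) and idempotent (using $\mathbf{U}_A^T\mathbf{U}_A = \mathbf{I}_k$, because the retained left singular vectors are orthonormal), hence by the theorem characterizing projections it is an orthogonal projection operator; and its range is exactly the column space of $\mathbf{A}$, which is $\mathcal{L}$.

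Then I would apply the Proposition stating that the map between closed subspaces of $\mathcal{H}$ and orthogonal projections is bijective: since $\mathbf{P}_A = \mathbf{U}_A\mathbf{U}_A^T$ and $\mathbf{P}_B = \mathbf{U}_B\mathbf{U}_B^T$ are both orthogonal projections with range $\mathcal{L}$, injectivity of that map forces $\mathbf{P}_A = \mathbf{P}_B$. The dimension hypothesis ($\dim\mathcal{L}=k$) is used only to note that $\mathbf{U}_A$ and $\mathbf{U}_B$ both have exactly $k$ orthonormal columns, so the formulas are well-posed.

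The only mild obstacle is the bookkeeping around the Moore–Penrose inverse and the non-full-rank case: one must be careful that $\mathbf{U}_A$ is taken to consist precisely of the left singular vectors with nonzero singular values (as the excerpt already stipulates), so that $\mathbf{U}_A^T\mathbf{U}_A = \mathbf{I}$ holds and the range of $\mathbf{U}_A\mathbf{U}_A^T$ is $\mathcal{L}$ rather than something smaller. Once that is pinned down, the argument is essentially a one-line appeal to the bijection; alternatively, one can give a direct coordinate-free proof by observing that for any $\mathbf{x}\in\mathcal{H}$, $\mathbf{P}_A\mathbf{x}$ is by construction (eq.~(\ref{eq:MinOrthProj})) the unique minimizer of $\|\mathbf{x}-\mathbf{z}\|$ over $\mathbf{z}\in\mathcal{L}$, a quantity manifestly independent of the chosen spanning set, and symmetrically for $\mathbf{P}_B$, whence $\mathbf{P}_A\mathbf{x} = \mathbf{P}_B\mathbf{x}$ for all $\mathbf{x}$.
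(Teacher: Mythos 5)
Your argument is correct, but it is not the route the paper takes. You identify $\mbf{U}_{A}\mbf{U}_{A}^{T}$ and $\mbf{U}_{B}\mbf{U}_{B}^{T}$ as orthogonal projectors (self-adjoint, idempotent via $\mbf{U}_{A}^{T}\mbf{U}_{A}=\mbf{I}$) whose range is exactly $\mathcal{L}$, and then conclude by uniqueness of the orthogonal projector onto a given subspace --- either through the stated bijection between closed subspaces and projections, or through the best-approximation characterization of eq.~(\ref{eq:MinOrthProj}). The paper instead stays entirely inside projection algebra: from $\mbf{P}_{A}\mbf{U}_{B}=\mbf{U}_{B}$ it derives $\mbf{P}_{A}\mbf{P}_{B}=\mbf{P}_{B}$ and, symmetrically, $\mbf{P}_{B}\mbf{P}_{A}=\mbf{P}_{A}$; since each product is itself a projection, Theorem~\ref{theo:product_of_projections} forces the factors to commute, and chaining the two identities gives $\mbf{P}_{A}=\mbf{P}_{B}\mbf{P}_{A}=\mbf{P}_{A}\mbf{P}_{B}=\mbf{P}_{B}$. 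Your version is shorter once the uniqueness fact is granted, and your explicit care that $\mbf{U}_{A}$ retains only the left singular vectors with nonzero singular values (so that $\mbf{U}_{A}^{T}\mbf{U}_{A}=\mbf{I}_{k}$ and the range is all of $\mathcal{L}$, even when $\mbf{A}$ has linearly dependent columns) is a point the paper leaves implicit; the paper's version, in exchange, is self-contained at the level of the product-of-projections theorem and does not need to invoke the subspace--projector bijection or the minimization characterization at all. Note also that the paper's step $\mbf{P}_{A}\mbf{U}_{B}=\mbf{U}_{B}$ quietly uses the same fact you make explicit, namely that the columns of $\mbf{U}_{B}$ lie in $\mathcal{L}$ and that $\mbf{P}_{A}$ fixes $\mathcal{L}$ pointwise, so the two proofs share that ingredient even though the concluding mechanisms differ.
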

\begin{proof}
Since the columns of $\mbf{U}_{B}$ as linear combinations of $\mbf{B}$
are in the $\mathcal{L}$, thus
$\mbf{P}_{A}\mbf{U}_{B} = \mbf{U}_{B}$. Multiplying both sides with
$\mbf{U}_{B}^{T}$ we obtain that
$\mbf{P}_{A}\mbf{U}_{B}\mbf{U}_{B}^{T} = \mbf{U}_{B}\mbf{U}_{B}^{T}$
which is $\mbf{P}_{A}\mbf{P}_{B}=\mbf{P}_{B}$. Because the right hand
side, $\mbf{P}_{B}$,
is a projection, the left hand side $\mbf{P}_{A}\mbf{P}_{B}$ is also
one. Following the same line we have
$\mbf{P}_{B}\mbf{P}_{A}=\mbf{P}_{A}$ as well.
From Theorem \ref{theo:product_of_projections} we know that if the
product of projections is a projection, then the product of projections
is commutative,
$\mbf{P}_{B}\mbf{P}_{A}=\mbf{P}_{A}\mbf{P}_{B}$. Finally we can
conclude that
\begin{equation*}
\mbf{P}_{A}= \mbf{P}_{B}\mbf{P}_{A}=\mbf{P}_{A}\mbf{P}_{B} = \mbf{P}_{B}.
\end{equation*}     
\end{proof}

We also exploited that if $\mathcal{H}$ is finite dimensional and
the corresponding field is $\mathbb{R}$ then the adjoint of
$\mbf{P}^{*}$ is represented by the transpose $\mbf{P}^{T}$ of the
matrix $\mbf{P}$. 


\subsubsection{Projection onto the intersection of subspaces -
General view}\label{sec:projections_to_intersections}

Our algorithm hinges on the orthogonal projector of the intersection of a
set of subspaces
$\{\mathcal{L}_1,\mathcal{L}_2,\dots,\mathcal{L}_{n_L}\}$. To
introduce this concept, here we mainly follow the line presented by
\cite{ABenIsrael2015}. We can start with some classical  result, first we can recall \cite{JNeumann1950}, who derived a solution in case of two
subspaces as a limit: 
\begin{equation}
\label{eq:intersect2JN}
\mbf{P}_{\mathcal{L}_1,\cap \mathcal{L}_{2}}=\lim_{n\rightarrow \infty} (\mbf{P}_{\mathcal{L}_1} \mbf{P}_{\mathcal{L}_2})^n.
\end{equation}

That result has been extended to arbitrary finite sets of subspaces by
\cite{IHalperin1962},: 
\begin{equation}
\mbf{P}_{\mathcal{L}_1,\cap \dots \cap \mathcal{L}_{n_L}} = 
\lim_{n\rightarrow \infty} (\mbf{P}_{\mathcal{L}_1} \dots \mbf{P}_{L_n})^n.
\end{equation}

\cite{WNAnderson1969}, gave an explicit formula
for the case of two subspaces by 
\begin{equation}
\mbf{P}_{\mathcal{L}_1,\cap \mathcal{L}_{2}}=2 \mbf{P}_{\mathcal{L}_1}
(\mbf{P}_{\mathcal{L}_1} + \mbf{P}_{\mathcal{L}_2})^{\dagger} \mbf{P}_{\mathcal{L}_2}.
\end{equation}  

\cite{ABenIsrael2015}, provides an alternative 
to compute $\mbf{P}_{\mathcal{L}_1,\cap \dots \cap \mathcal{L}_{n_L}}$
Here we rely on the Lemma 4.1 and Corollary 4.2 of his work:

\begin{proposition}
\label{prop:dualproj}
For $i=1,\dots,n_L$, let $\mathcal{L}_i$ be subspaces of $\mathcal{H}$,
$\mbf{P}_i$ be the corresponding projectors,
$\mbf{P}^{\perp}_i=\mbf{I}-\mbf{P}_i$, and $\lambda_i>0$. Define 
$
\mbf{Q}:=\sum_{i=1}^{n_L}\lambda_i \mbf{P}_i^{\perp}.
$
then  we have 
$ \mbf{P}_{\mathcal{L}_1,\cap \dots \cap \mathcal{L}_{n_L}}=\mbf{I}-\mbf{Q}^{\dagger}\mbf{Q}.
$
With the particular choice $\sum_{i=1}^{n_L} \lambda_i=1$, $\mbf{Q}$
might be written as   
$ \mbf{Q}:=\mbf{I}-\sum_{i=1}^{n_L}\lambda_i \mbf{P}_i,
$
eliminating all the complements of the projectors.
\end{proposition}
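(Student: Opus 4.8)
The plan is to recognize $\mbf{Q}$ as a positive semidefinite self-adjoint operator whose null space is exactly the intersection $\mathcal{L}_1\cap\dots\cap\mathcal{L}_{n_L}$, and then to invoke the standard fact that for such an operator $\mbf{Q}^{\dagger}\mbf{Q}$ is the orthogonal projector onto $\mathcal{N}(\mbf{Q})^{\perp}$, so that $\mbf{I}-\mbf{Q}^{\dagger}\mbf{Q}$ is the projector onto $\mathcal{N}(\mbf{Q})$, which is the claimed intersection projector. The second assertion is then a one-line algebraic expansion.

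First I would check that $\mbf{Q}$ is self-adjoint and positive. By the corollary to the Difference of projections theorem, each $\mbf{P}_i^{\perp}=\mbf{I}-\mbf{P}_i$ is itself an orthogonal projector, hence self-adjoint with $\braket{\mbf{P}_i^{\perp}\mbf{x},\mbf{x}}=\|\mbf{P}_i^{\perp}\mbf{x}\|^2\ge 0$. Since $\lambda_i>0$, the combination $\mbf{Q}=\sum_i\lambda_i\mbf{P}_i^{\perp}$ is self-adjoint and positive, with $\braket{\mbf{Q}\mbf{x},\mbf{x}}=\sum_i\lambda_i\|\mbf{P}_i^{\perp}\mbf{x}\|^2$. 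From this quadratic form and positivity of each summand, $\braket{\mbf{Q}\mbf{x},\mbf{x}}=0$ holds if and only if $\mbf{P}_i^{\perp}\mbf{x}=0$ for every $i$, i.e. $\mbf{x}\in\mathcal{L}_i$ for every $i$; and because $\mbf{Q}$ is positive self-adjoint, $\braket{\mbf{Q}\mbf{x},\mbf{x}}=0\Leftrightarrow\mbf{Q}\mbf{x}=\mbf{0}$ (pass through the positive square root of $\mbf{Q}$, or apply Cauchy--Schwarz to the PSD form $\braket{\mbf{Q}\cdot,\cdot}$). Hence $\mathcal{N}(\mbf{Q})=\bigcap_{i=1}^{n_L}\mathcal{L}_i$.

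Next I would identify $\mbf{Q}^{\dagger}\mbf{Q}$. Using the spectral decomposition of the self-adjoint operator $\mbf{Q}=\mbf{U}\mbf{\Sigma}\mbf{U}^{T}$, with $\mbf{\Sigma}$ collecting the nonzero eigenvalues and $\mbf{U}$ the corresponding orthonormal eigenvectors, the same Moore--Penrose computation used for eq. (\ref{eq:proj-singular-repr}) gives $\mbf{Q}^{\dagger}\mbf{Q}=\mbf{U}\mbf{\Sigma}^{\dagger}\mbf{U}^{T}\mbf{U}\mbf{\Sigma}\mbf{U}^{T}=\mbf{U}\mbf{U}^{T}=\mbf{P}_{\mathcal{R}(\mbf{Q})}$, and for self-adjoint $\mbf{Q}$ one has $\mathcal{R}(\mbf{Q})=\mathcal{N}(\mbf{Q})^{\perp}$. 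Therefore $\mbf{Q}^{\dagger}\mbf{Q}=\mbf{P}_{\mathcal{N}(\mbf{Q})^{\perp}}$, and by the corollary to the Difference of projections theorem, $\mbf{I}-\mbf{Q}^{\dagger}\mbf{Q}=\mbf{P}_{\mathcal{N}(\mbf{Q})}=\mbf{P}_{\mathcal{L}_1\cap\dots\cap\mathcal{L}_{n_L}}$. Finally, for the particular choice $\sum_i\lambda_i=1$, expanding $\mbf{Q}=\sum_i\lambda_i(\mbf{I}-\mbf{P}_i)=\big(\sum_i\lambda_i\big)\mbf{I}-\sum_i\lambda_i\mbf{P}_i=\mbf{I}-\sum_i\lambda_i\mbf{P}_i$ gives the stated alternative form. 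The only genuinely delicate point — and the main obstacle — is the step $\mbf{Q}^{\dagger}\mbf{Q}=\mbf{P}_{\mathcal{N}(\mbf{Q})^{\perp}}$: one must correctly use that left multiplication by the Moore--Penrose inverse recovers the orthogonal projector onto the (co-)range together with $\mathcal{R}(\mbf{Q})=\mathcal{N}(\mbf{Q})^{\perp}$ for self-adjoint $\mbf{Q}$; both are standard pseudo-inverse facts (available from the cited references), and the spectral/SVD argument above makes them explicit. Everything else is bookkeeping with the positivity of projectors.
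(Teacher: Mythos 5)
Your proof is correct. Note that the paper itself does not prove this proposition at all: it is imported verbatim from the cited work of Ben-Israel (Lemma 4.1 and Corollary 4.2 of \cite{ABenIsrael2015}), so your argument effectively supplies the proof that the paper delegates to the reference, and it is the standard one. The two ingredients you use are exactly the right ones: (i) since each $\mbf{P}_i^{\perp}$ is an orthogonal projector and $\lambda_i>0$, the operator $\mbf{Q}$ is self-adjoint and positive with $\braket{\mbf{Q}\mbf{x},\mbf{x}}=\sum_i\lambda_i\|\mbf{P}_i^{\perp}\mbf{x}\|^2$, which forces $\mathcal{N}(\mbf{Q})=\bigcap_i\mathcal{L}_i$ (the implication $\braket{\mbf{Q}\mbf{x},\mbf{x}}=0\Rightarrow\mbf{Q}\mbf{x}=\mbf{0}$ via the square root or Cauchy--Schwarz is handled correctly, and the converse inclusion is immediate since $\mbf{P}_i^{\perp}\mbf{x}=\mbf{0}$ for all $i$ gives $\mbf{Q}\mbf{x}=\mbf{0}$); and (ii) the Moore--Penrose identity $\mbf{Q}^{\dagger}\mbf{Q}=\mbf{P}_{\mathcal{R}(\mbf{Q})}=\mbf{P}_{\mathcal{N}(\mbf{Q})^{\perp}}$ for self-adjoint $\mbf{Q}$, which your spectral-decomposition computation in the style of eq.~(\ref{eq:proj-singular-repr}) establishes explicitly, yielding $\mbf{I}-\mbf{Q}^{\dagger}\mbf{Q}=\mbf{P}_{\mathcal{N}(\mbf{Q})}$. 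The final expansion under $\sum_i\lambda_i=1$ is indeed a one-liner. The only caveat worth stating is dimensional: your spectral/SVD step and the identity $\mathcal{R}(\mbf{Q})=\mathcal{N}(\mbf{Q})^{\perp}$ are automatic in the finite-dimensional setting the paper works in ($\mathcal{H}=\mathbb{R}^d$), whereas in an infinite-dimensional Hilbert space one would additionally need $\mbf{Q}$ to have closed range for the pseudo-inverse argument to go through; since the paper restricts to finite dimension, this does not affect the validity of your proof here.
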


By exploting that for any projector $\mbf{P}$
$\mbf{P}^{\perp}=\mbf{I}-\mbf{P}$, the $\mbf{Q}_t$ corrsponding to
$\mbf{P}_{\mathcal{L}_{V} \cap \mathcal{L}_{\tilde{X}_t^{\perp}}}$ can
be written as
\begin{equation}
\mbf{Q}_t =\lambda_{V} (\mbf{I}-\mbf{P}_{\mathcal{L}_{V}}) + \sum_{
  \mbf{x}\in \tilde{X}_t} \lambda_x \mbf{P}_{L_{\{\mbf{x}\}}}.      
\end{equation} 

The critical point is the computation of the Moore-Penrose inverse of
$\mbf{Q}$. 

\subsection{Expressing the projector into intersection} \label{sec:projection_for_variable_selection}

To implement the proposed variable selection algorithm (Figure
\ref{sec:algprojfilter}) the projection into the intersection of an
arbitrary subspace $\mathcal{L}_{\mbf{P}}$ and the complement of an
arbitrary vector $\mbf{x}$, $\mbf{P}_{\mathcal{L} \cap
\mbf{x}^{\perp}}$,  has to be computed. The projector
$\mbf{P}_{\mathcal{L}^{\perp}}$ to the complement of a subspace
$\mathcal{L}$ can be expressed as $\mbf{I} - \mbf{P}_{\mathcal{L}}$, hence
 the projector into 
$\mbf{P}_{\mbf{x}^{\perp}}$ is given by $\mbf{I} -
\dfrac{\mbf{x}\mbf{x}^{T}}{||\mbf{x}||^2}$. Since $\mathcal{L}$ is
arbitrary we use $\mbf{P}$ instead of $\mbf{P}_{\mathcal{L}}$ for sake
of simplicity. 

While we have these two projectors, their product, according to Theorem \ref{theo:product_of_projections}, is not a projection as it does not commute:
\begin{equation}
  \mbf{P} \left( \mbf{I} -
    \dfrac{\mbf{x}\mbf{x}^{T}}{||\mbf{x}||^2} \right) = \mbf{P} -
    \dfrac{\mbf{P}\mbf{x}\mbf{x}^{T}}{||\mbf{x}||^2}
  \ne
  \left( \mbf{I} - \dfrac{\mbf{x}\mbf{x}^{T}}{||\mbf{x}||^2}
  \right) \mbf{P}
  =\mbf{P} - \dfrac{\mbf{x}\mbf{x}^{T}\mbf{P}}{||\mbf{x}||^2},   
\end{equation}  
because in the general case $\mbf{P}\mbf{x}\mbf{x}^{T} \ne
\mbf{x}\mbf{x}^{T}\mbf{P}$. To overcome this problem we can
recognize that the intersection $\mathcal{L}_{\mbf{P}} \cap
\mathcal{L}_{\mbf{x}^{\perp}}$ can be expressed after a simple
transformation.
\begin{lemma}
\label{lemma:x_Px}
Let $\mbf{P}$ be a projector and $\mbf{x}$ be any vector, 
then the intersections $\mathcal{L}_{\mbf{P}}\cap
\mathcal{L}_{\mbf{x}^{\perp}}$ and $\mathcal{L}_{\mbf{P}}\cap
\mathcal{L}_{(\mbf{P}\mbf{x})^{\perp}}$ are the same subspaces of
$\mathcal{L}_{\mbf{P}}$.  
\end{lemma}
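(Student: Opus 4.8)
The claim is a set equality of two subspaces of $\mathcal{L}_{\mbf{P}}$, so the natural approach is double inclusion, working with an arbitrary vector $\mbf{v}$ in each side. The key observation is that every vector on either side already lies in $\mathcal{L}_{\mbf{P}}$, so for such $\mbf{v}$ we have $\mbf{P}\mbf{v}=\mbf{v}$, and the orthogonality condition can be freely rewritten using self-adjointness of $\mbf{P}$. Concretely, I would start from a vector $\mbf{v}\in\mathcal{L}_{\mbf{P}}$ and compare the two conditions ``$\mbf{v}\perp\mbf{x}$'' and ``$\mbf{v}\perp\mbf{P}\mbf{x}$''. Since $\mbf{v}=\mbf{P}\mbf{v}$, we get $\langle\mbf{v},\mbf{x}\rangle = \langle\mbf{P}\mbf{v},\mbf{x}\rangle = \langle\mbf{v},\mbf{P}\mbf{x}\rangle$, where the last step uses $\mbf{P}=\mbf{P}^{*}$ (in matrix terms $\mbf{P}=\mbf{P}^{T}$, as noted after eq.~(\ref{eq:proj-singular-repr})). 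Hence the two inner products coincide for every $\mbf{v}\in\mathcal{L}_{\mbf{P}}$, so one is zero if and only if the other is.

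Writing this out: if $\mbf{v}\in\mathcal{L}_{\mbf{P}}\cap\mathcal{L}_{\mbf{x}^{\perp}}$, then $\mbf{v}\in\mathcal{L}_{\mbf{P}}$ and $\langle\mbf{v},\mbf{x}\rangle=0$; by the identity above $\langle\mbf{v},\mbf{P}\mbf{x}\rangle=\langle\mbf{v},\mbf{x}\rangle=0$, so $\mbf{v}\in\mathcal{L}_{(\mbf{P}\mbf{x})^{\perp}}$, giving $\mbf{v}\in\mathcal{L}_{\mbf{P}}\cap\mathcal{L}_{(\mbf{P}\mbf{x})^{\perp}}$. The reverse inclusion is identical: if $\mbf{v}\in\mathcal{L}_{\mbf{P}}$ with $\langle\mbf{v},\mbf{P}\mbf{x}\rangle=0$, then again $\langle\mbf{v},\mbf{x}\rangle=\langle\mbf{P}\mbf{v},\mbf{x}\rangle=\langle\mbf{v},\mbf{P}\mbf{x}\rangle=0$. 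So the two intersections have exactly the same elements. The containment of both in $\mathcal{L}_{\mbf{P}}$ is immediate since both are intersections with $\mathcal{L}_{\mbf{P}}$.

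There is essentially no hard part here; the only thing to be careful about is the direction of the self-adjointness rewriting and the fact that one genuinely needs $\mbf{v}\in\mathcal{L}_{\mbf{P}}$ for the argument — the identity $\langle\mbf{v},\mbf{x}\rangle=\langle\mbf{v},\mbf{P}\mbf{x}\rangle$ is false for general $\mbf{v}$, so the restriction to the subspace $\mathcal{L}_{\mbf{P}}$ is doing real work and should be emphasized. One could alternatively phrase the whole thing more slickly: for $\mbf{v}\in\mathcal{L}_{\mbf{P}}$, the functional $\mbf{w}\mapsto\langle\mbf{v},\mbf{w}\rangle$ agrees with $\mbf{w}\mapsto\langle\mbf{v},\mbf{P}\mbf{w}\rangle$, so $\mbf{x}$ and $\mbf{P}\mbf{x}$ impose the same orthogonality constraint on the elements of $\mathcal{L}_{\mbf{P}}$. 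Either way the proof is a two- or three-line inner-product computation invoking only idempotency and self-adjointness of $\mbf{P}$, both recorded in the earlier theorem characterizing projections.
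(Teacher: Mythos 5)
Your proof is correct and follows essentially the same route as the paper's: both arguments observe that any vector $\mbf{u}$ in either intersection satisfies $\mbf{P}\mbf{u}=\mbf{u}$, and then use self-adjointness of $\mbf{P}$ to conclude $\braket{\mbf{x},\mbf{u}}=\braket{\mbf{P}\mbf{x},\mbf{u}}$, so the two orthogonality conditions coincide on $\mathcal{L}_{\mbf{P}}$. Your write-up is merely more explicit about the two inclusions and about where membership in $\mathcal{L}_{\mbf{P}}$ is genuinely needed, which is a fair point of emphasis but not a different proof.
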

\begin{proof}
Any vector $\mbf{u}$ is in $\mathcal{L}_{\mbf{P}}$ if $\mbf{P}\mbf{u}
= \mbf{u}$, $\mbf{u}$ is in $\mathcal{L}_{\mbf{x}^{\perp}}$ if
$\braket{\mbf{x},\mbf{u}}=0$, and $\mbf{u}$ is in
$\mathcal{L}_{(\mbf{P}\mbf{x})^{\perp}}$ if
$\braket{\mbf{P}\mbf{x},\mbf{u}}=0$. Since $\mbf{P}\mbf{u}
= \mbf{u}$, therefore $\braket{\mbf{x},\mbf{u}} =
\braket{\mbf{P}\mbf{x},\mbf{u}}=0$.      
\end{proof}
By projecting $\mbf{x}$ into
$\mathcal{L}$ first, and then computing the corresponding
intersection, we can compute the projector into 
$\mathcal{L}_{\mbf{P}} \cap \mathcal{L}_{\mbf{x}^{\perp}}$ in a simple way.





\begin{proposition}
\label{prop:intersection_limit}
Let $\|\mbf{x}\|=1$ and $\alpha=\|\mbf{P}\mbf{x}\|^2=\mbf{x}^{T}\mbf{P}\mbf{P}\mbf{x}=\mbf{x}^{T}\mbf{P}\mbf{x}$. If $\alpha>0$ then 
\begin{equation}
 \fbox{$  \mbf{P}_{\mathcal{L}_{\mbf{P}} \cap
 \mathcal{L}_{\mbf{x}^{\perp}}} $}
  = \mbf{P}_{\mathcal{L}_{\mbf{P}} \cap
  \mathcal{L}_{(\mbf{P}{\mbf{x})^{\perp}}}} 
  = \mbf{P}\left(\mbf{I}- \tfrac{1}{\alpha}\mbf{P}\mbf{x}\mbf{x}^{T}\mbf{P}\right)
  = \fbox{$ \mbf{P}-\tfrac{1}{\alpha}\mbf{P}\mbf{x}\mbf{x}^{T}\mbf{P} $}.
\end{equation}
When $\alpha=0$, which means $\mbf{x}$ orthogonal to
$\mathcal{L}_{\mbf{P}}$, then we have
  $ \mbf{P}_{\mathcal{L}_{\mbf{P}} \cap \mathcal{L}_{\mbf{x}^{\perp}}}  
 = \mbf{P}.$
\end{proposition}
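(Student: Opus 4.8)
The plan is to verify that the operator $\mbf{M} := \mbf{P} - \tfrac{1}{\alpha}\mbf{P}\mbf{x}\mbf{x}^{T}\mbf{P}$ is an orthogonal projection (self-adjoint and idempotent) and then identify its range with $\mathcal{L}_{\mbf{P}}\cap\mathcal{L}_{(\mbf{P}\mbf{x})^{\perp}}$, which by Lemma~\ref{lemma:x_Px} equals $\mathcal{L}_{\mbf{P}}\cap\mathcal{L}_{\mbf{x}^{\perp}}$. Self-adjointness is immediate since $\mbf{P}=\mbf{P}^{T}$ and the second term is symmetric. For idempotence I would expand $\mbf{M}^{2}$ using $\mbf{P}^{2}=\mbf{P}$ and the scalar identity $\mbf{x}^{T}\mbf{P}\mbf{P}\mbf{x}=\mbf{x}^{T}\mbf{P}\mbf{x}=\alpha$: the cross terms contribute $-\tfrac{2}{\alpha}\mbf{P}\mbf{x}\mbf{x}^{T}\mbf{P}$ and the last term contributes $+\tfrac{1}{\alpha^{2}}\mbf{P}\mbf{x}(\mbf{x}^{T}\mbf{P}\mbf{x})\mbf{x}^{T}\mbf{P} = +\tfrac{1}{\alpha}\mbf{P}\mbf{x}\mbf{x}^{T}\mbf{P}$, so everything collapses back to $\mbf{M}$. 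This is the routine computation and needs $\alpha>0$ only so that the expression is well-defined.

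Next I would pin down the range. Writing $\mbf{M} = \mbf{P}\bigl(\mbf{I}-\tfrac{1}{\alpha}\mbf{P}\mbf{x}\mbf{x}^{T}\mbf{P}\bigr)$ shows $\mathcal{R}(\mbf{M})\subseteq\mathcal{R}(\mbf{P})=\mathcal{L}_{\mbf{P}}$. Moreover for any $\mbf{u}$, $\langle\mbf{P}\mbf{x},\mbf{M}\mbf{u}\rangle = \langle\mbf{x},\mbf{P}\mbf{M}\mbf{u}\rangle = \langle\mbf{x},\mbf{M}\mbf{u}\rangle$ (since $\mbf{P}\mbf{M}=\mbf{M}$) and a direct expansion gives $\mbf{x}^{T}\mbf{M}\mbf{u} = \mbf{x}^{T}\mbf{P}\mbf{u} - \tfrac{1}{\alpha}(\mbf{x}^{T}\mbf{P}\mbf{x})\mbf{x}^{T}\mbf{P}\mbf{u} = \mbf{x}^{T}\mbf{P}\mbf{u} - \mbf{x}^{T}\mbf{P}\mbf{u}=0$, so $\mathcal{R}(\mbf{M})\subseteq\mathcal{L}_{(\mbf{P}\mbf{x})^{\perp}}$, hence $\mathcal{R}(\mbf{M})\subseteq\mathcal{L}_{\mbf{P}}\cap\mathcal{L}_{(\mbf{P}\mbf{x})^{\perp}}$. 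For the reverse inclusion, take $\mbf{u}$ in that intersection: then $\mbf{P}\mbf{u}=\mbf{u}$ and $\mbf{x}^{T}\mbf{P}\mbf{u}=\langle\mbf{P}\mbf{x},\mbf{u}\rangle=0$, so $\mbf{M}\mbf{u} = \mbf{P}\mbf{u} - \tfrac{1}{\alpha}\mbf{P}\mbf{x}(\mbf{x}^{T}\mbf{P}\mbf{u}) = \mbf{u}$, showing $\mbf{u}\in\mathcal{R}(\mbf{M})$. Since an orthogonal projection is uniquely determined by its range (the Proposition on bijectivity of subspaces and projectors), $\mbf{M} = \mbf{P}_{\mathcal{L}_{\mbf{P}}\cap\mathcal{L}_{(\mbf{P}\mbf{x})^{\perp}}}$, and combining with Lemma~\ref{lemma:x_Px} finishes the $\alpha>0$ case.

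For the degenerate case $\alpha = 0$: $\alpha = \|\mbf{P}\mbf{x}\|^{2}=0$ means $\mbf{P}\mbf{x}=\mbf{0}$, i.e. $\mbf{x}\in\mathcal{L}_{\mbf{P}}^{\perp}$. Then every $\mbf{u}\in\mathcal{L}_{\mbf{P}}$ satisfies $\langle\mbf{x},\mbf{u}\rangle=\langle\mbf{x},\mbf{P}\mbf{u}\rangle=\langle\mbf{P}\mbf{x},\mbf{u}\rangle=0$, so $\mathcal{L}_{\mbf{P}}\subseteq\mathcal{L}_{\mbf{x}^{\perp}}$ and therefore $\mathcal{L}_{\mbf{P}}\cap\mathcal{L}_{\mbf{x}^{\perp}}=\mathcal{L}_{\mbf{P}}$, giving $\mbf{P}_{\mathcal{L}_{\mbf{P}}\cap\mathcal{L}_{\mbf{x}^{\perp}}}=\mbf{P}$. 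I expect no real obstacle here; the only mild subtlety is being careful that $\alpha>0$ is exactly what makes the rank-one correction well-defined and that the scalar simplification $\mbf{x}^{T}\mbf{P}\mbf{x}=\alpha$ (using $\mbf{P}^{2}=\mbf{P}$ and self-adjointness) is what drives every cancellation. The "hard part" is really just bookkeeping in the $\mbf{M}^{2}$ expansion, which is elementary.
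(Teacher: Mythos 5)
Your proposal is correct and follows essentially the same route as the paper: verify that $\mbf{P}-\tfrac{1}{\alpha}\mbf{P}\mbf{x}\mbf{x}^{T}\mbf{P}$ is self-adjoint and idempotent (driven by $\mbf{x}^{T}\mbf{P}\mbf{x}=\alpha$), check that its output is orthogonal to $\mbf{x}$ (equivalently $\mbf{P}\mbf{x}$), and use Lemma~\ref{lemma:x_Px} to pass between $\mathcal{L}_{(\mbf{P}\mbf{x})^{\perp}}$ and $\mathcal{L}_{\mbf{x}^{\perp}}$. Your explicit two-sided range identification (combined with the bijection between closed subspaces and projectors) and your handling of the $\alpha=0$ case are, if anything, slightly more complete than the paper's verification, which instead identifies the range with the intersection by appealing to Theorem~\ref{theo:product_of_projections} on products of commuting projections.
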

We can check that
$\mbf{P}-\tfrac{1}{\alpha}\mbf{P}\mbf{x}\mbf{x}^{T}\mbf{P}$ is a real
projector. It is  idempotent, since
\begin{equation*}
\begin{array}{@{}l@{}}
\left(\mbf{P}-\tfrac{1}{\alpha}\mbf{P}\mbf{x}\mbf{x}^{T}\mbf{P}\right)^2 =\mbf{P}
-\tfrac{1}{\alpha}\mbf{P}\mbf{x}\mbf{x}^{T}\mbf{P}
-\tfrac{1}{\alpha}\mbf{P}\mbf{x}\mbf{x}^{T}\mbf{P}
+\tfrac{\alpha}{\alpha^2}\mbf{P}\mbf{x}\mbf{x}^{T}\mbf{P} =\ \mbf{P}-\tfrac{1}{\alpha}\mbf{P}\mbf{x}\mbf{x}^{T}\mbf{P}.
\end{array}
\end{equation*}
This agrees with Theorem \ref{theo:product_of_projections} which
states that the product of projections is a projection, idempotent and
adjoint, and it is map into the intersection of the corresponding
subspaces. 
Furthermore the orthogonality of $\mbf{x}$ and the projection of any
$\mbf{u}\in \mathcal{H}$ by
$\mbf{P}-\tfrac{1}{\alpha}\mbf{P}\mbf{x}\mbf{x}^{T}\mbf{P}$ can also be verified, as
\begin{equation*}
\begin{array}{ll}
\left\langle\mbf{x},\left(\mbf{P}-\tfrac{1}{\alpha}\mbf{P}\mbf{x}\mbf{x}^{T}\mbf{P}\right)\mbf{u}\right\rangle
=\mbf{x}^{T}\mbf{P}\mbf{u}
-\tfrac{1}{\alpha}\mbf{x}^{T}\mbf{P}\mbf{x}\mbf{x}^{T}\mbf{P}\mbf{u} 
=\mbf{x}^{T}\mbf{P}\mbf{u}-\tfrac{1}{\alpha}\alpha\mbf{x}^{T}\mbf{P}\mbf{u}
=0.
\end{array}
\end{equation*}


\begin{definition}(\cite{TTjur1984})
Two subspaces $\mathcal{L}_1$ and $\mathcal{L}_2$ are {\em orthogonal} if
for any vectors, $\mbf{x_1}\in \mathcal{L}_1$ and $\mbf{x_2}\in
\mathcal{L}_2$ $\braket{\mbf{x}_1,\mbf{x}_2}=0$ holds.
Two subspaces $\mathcal{L}_1$ and $\mathcal{L}_2$ are {\em geometrically
orthogonal} if $\mathcal{L}_1 = (\mathcal{L}_1 \cap \mathcal{L}_2)
\oplus \mathcal{C}_1$ and $\mathcal{L}_2 = (\mathcal{L}_1 \cap \mathcal{L}_2)
\oplus \mathcal{C}_2$ and $\mathcal{C}_1$ and $\mathcal{C}_2$ are
orthogonal.   
\end{definition}
\begin{lemma}(\cite{TTjur1984})
Two subspaces $\mathcal{L}_1$ and $\mathcal{L}_2$ are {\em geometrically
orthogonal} if and only if
$\mbf{P}_{\mathcal{L}_1}\mbf{P}_{\mathcal{L}_2} =
\mbf{P}_{\mathcal{L}_2}\mbf{P}_{\mathcal{L}_1}$ and
$\mathcal{L}_{\mbf{P}_{\mathcal{L}_1}\mbf{P}_{\mathcal{L}_2}} =
\mathcal{L}_1\cap \mathcal{L}_2$.  
\end{lemma}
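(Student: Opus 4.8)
The statement is a characterization of \emph{geometric orthogonality} of two subspaces $\mathcal{L}_1,\mathcal{L}_2$ in terms of two conditions: commutation $\mbf{P}_1\mbf{P}_2 = \mbf{P}_2\mbf{P}_1$ (writing $\mbf{P}_i$ for $\mbf{P}_{\mathcal{L}_i}$), together with $\mathcal{L}_{\mbf{P}_1\mbf{P}_2} = \mathcal{L}_1 \cap \mathcal{L}_2$. Since it is an ``if and only if'', I would prove the two implications separately, using the orthogonal decompositions in the definition as the bridge.

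For the forward direction, assume $\mathcal{L}_1 = (\mathcal{L}_1\cap\mathcal{L}_2)\oplus\mathcal{C}_1$ and $\mathcal{L}_2 = (\mathcal{L}_1\cap\mathcal{L}_2)\oplus\mathcal{C}_2$ with $\mathcal{C}_1\perp\mathcal{C}_2$. Write $\mathcal{M} := \mathcal{L}_1\cap\mathcal{L}_2$. First I would argue that $\mathcal{C}_1$, $\mathcal{C}_2$ and $\mathcal{M}$ are mutually orthogonal: $\mathcal{C}_1\perp\mathcal{M}$ and $\mathcal{C}_2\perp\mathcal{M}$ by the direct-sum decompositions (an orthogonal decomposition in a Hilbert space makes the summands orthogonal), and $\mathcal{C}_1\perp\mathcal{C}_2$ by hypothesis. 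Hence by the Sum of projections theorem, $\mbf{P}_1 = \mbf{P}_{\mathcal{M}} + \mbf{P}_{\mathcal{C}_1}$ and $\mbf{P}_2 = \mbf{P}_{\mathcal{M}} + \mbf{P}_{\mathcal{C}_2}$. Multiplying these out and using that the projectors onto the three mutually orthogonal subspaces annihilate each other ($\mbf{P}_{\mathcal{M}}\mbf{P}_{\mathcal{C}_i} = \mbf{0}$, $\mbf{P}_{\mathcal{C}_1}\mbf{P}_{\mathcal{C}_2} = \mbf{0}$, and each is idempotent), I get $\mbf{P}_1\mbf{P}_2 = \mbf{P}_{\mathcal{M}}^2 = \mbf{P}_{\mathcal{M}}$, which is symmetric in the indices, so it equals $\mbf{P}_2\mbf{P}_1$; and its range is exactly $\mathcal{M} = \mathcal{L}_1\cap\mathcal{L}_2$. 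That establishes both conclusions at once.

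For the converse, assume $\mbf{P}_1\mbf{P}_2 = \mbf{P}_2\mbf{P}_1$ and $\mathcal{L}_{\mbf{P}_1\mbf{P}_2} = \mathcal{M} := \mathcal{L}_1\cap\mathcal{L}_2$. By the Product of projections theorem, commutation already gives that $\mbf{P}:=\mbf{P}_1\mbf{P}_2$ is the orthogonal projector onto $\mathcal{M}$, so $\mbf{P}=\mbf{P}_{\mathcal{M}}$. Now I would set $\mathcal{C}_1 := \mathcal{L}_1 \cap \mathcal{M}^{\perp}$ and $\mathcal{C}_2 := \mathcal{L}_2 \cap \mathcal{M}^{\perp}$; since $\mathcal{M}\subseteq\mathcal{L}_i$, the Difference of projections theorem (or just $\mathcal{H} = \mathcal{M}\oplus\mathcal{M}^{\perp}$ restricted to $\mathcal{L}_i$) gives the orthogonal decompositions $\mathcal{L}_i = \mathcal{M}\oplus\mathcal{C}_i$ required by the definition, with $\mbf{P}_{\mathcal{C}_i} = \mbf{P}_i - \mbf{P}_{\mathcal{M}}$. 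It remains to show $\mathcal{C}_1\perp\mathcal{C}_2$, i.e. $\mbf{P}_{\mathcal{C}_1}\mbf{P}_{\mathcal{C}_2} = \mbf{0}$. Expanding, $\mbf{P}_{\mathcal{C}_1}\mbf{P}_{\mathcal{C}_2} = (\mbf{P}_1-\mbf{P}_{\mathcal{M}})(\mbf{P}_2-\mbf{P}_{\mathcal{M}}) = \mbf{P}_1\mbf{P}_2 - \mbf{P}_1\mbf{P}_{\mathcal{M}} - \mbf{P}_{\mathcal{M}}\mbf{P}_2 + \mbf{P}_{\mathcal{M}}$; using $\mbf{P}_1\mbf{P}_2 = \mbf{P}_{\mathcal{M}}$ and the Partial order theorem facts $\mbf{P}_{\mathcal{M}}\mbf{P}_i = \mbf{P}_i\mbf{P}_{\mathcal{M}} = \mbf{P}_{\mathcal{M}}$ (since $\mathcal{M}\subseteq\mathcal{L}_i$), every term cancels and the product is $\mbf{0}$. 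Hence $\mathcal{L}_1$ and $\mathcal{L}_2$ are geometrically orthogonal.

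The only mild subtlety — and the place I would be most careful — is the bookkeeping in the converse that the $\mathcal{C}_i$ I defined are genuinely the orthogonal complements of $\mathcal{M}$ inside $\mathcal{L}_i$ and that $\mbf{P}_{\mathcal{C}_i} = \mbf{P}_i - \mbf{P}_{\mathcal{M}}$; this is exactly the content of the Difference of projections theorem applied to the pair $\mathcal{M}\subseteq\mathcal{L}_i$, so no new work is needed. Everything else is a short algebra of commuting projectors built on the already-stated product/sum/difference/partial-order theorems.
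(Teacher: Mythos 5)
Your proof is correct, but note that the paper itself gives no argument for this lemma at all: it is imported verbatim from Tjur (1984) as a citation, and the paper only uses it (via Proposition 11) as a corollary of Proposition \ref{prop:intersection_limit}. What you have written is therefore a self-contained derivation that the paper omits, and it is nicely economical in that it only invokes machinery the paper has already stated: the sum-of-projections theorem for the forward direction ($\mbf{P}_i=\mbf{P}_{\mathcal{M}}+\mbf{P}_{\mathcal{C}_i}$, mutually annihilating projectors, product $=\mbf{P}_{\mathcal{M}}$), and the product, difference and partial-order theorems for the converse ($\mbf{P}_1\mbf{P}_2=\mbf{P}_{\mathcal{M}}$, $\mbf{P}_{\mathcal{C}_i}=\mbf{P}_i-\mbf{P}_{\mathcal{M}}$, and the four-term cancellation giving $\mbf{P}_{\mathcal{C}_1}\mbf{P}_{\mathcal{C}_2}=\mbf{0}$). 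One caveat deserves to be made explicit rather than parenthetical: your forward direction needs $\mathcal{C}_i\perp\mathcal{M}$, i.e.\ the symbol $\oplus$ in the definition of geometric orthogonality must be read as an \emph{orthogonal} direct sum (equivalently $\mathcal{C}_i=\mathcal{L}_i\cap\mathcal{M}^{\perp}$, which is Tjur's actual definition). A general direct-sum decomposition in a Hilbert space does not force the summands to be orthogonal, and with the purely algebraic reading the implication is false: in $\mathbb{R}^3$ take $\mathcal{L}_1=\mathrm{span}(e_1,e_2)$, $\mathcal{L}_2=\mathrm{span}(e_1,e_2+e_3)$, $\mathcal{M}=\mathrm{span}(e_1)$, $\mathcal{C}_1=\mathrm{span}(e_1+e_2)$, $\mathcal{C}_2=\mathrm{span}(-e_1+e_2+e_3)$; these satisfy the algebraic decompositions and $\mathcal{C}_1\perp\mathcal{C}_2$, yet $\mbf{P}_1\mbf{P}_2\ne\mbf{P}_2\mbf{P}_1$. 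With the orthogonal reading made explicit, your argument goes through exactly as written.
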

\begin{proposition}
The subspaces $\mathcal{L}_{\mbf{P}}$ and
$\mathcal{L}_{(\mbf{P}\mbf{x})^{\perp}}$ are {\em geometrically
orthogonal}.
\end{proposition}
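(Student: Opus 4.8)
The plan is to verify the two conditions in Tjur's Lemma: that $\mbf{P}_{\mathcal{L}_{\mbf{P}}}$ and $\mbf{P}_{\mathcal{L}_{(\mbf{P}\mbf{x})^{\perp}}}$ commute, and that the subspace associated with their product equals $\mathcal{L}_{\mbf{P}} \cap \mathcal{L}_{(\mbf{P}\mbf{x})^{\perp}}$. Writing $\mbf{P}$ for $\mbf{P}_{\mathcal{L}_{\mbf{P}}}$ and $\mbf{R} = \mbf{I} - \tfrac{1}{\alpha}\mbf{P}\mbf{x}\mbf{x}^{T}\mbf{P}$ for the projector onto $\mathcal{L}_{(\mbf{P}\mbf{x})^{\perp}}$ (assuming $\alpha = \|\mbf{P}\mbf{x}\|^2 > 0$; the degenerate case $\alpha = 0$ forces $\mbf{P}\mbf{x} = \mbf{0}$, so $\mbf{R} = \mbf{I}$ and the claim is trivial), the first step is a direct computation of both $\mbf{P}\mbf{R}$ and $\mbf{R}\mbf{P}$.

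First I would compute $\mbf{P}\mbf{R} = \mbf{P} - \tfrac{1}{\alpha}\mbf{P}\mbf{x}\mbf{x}^{T}\mbf{P}$, using $\mbf{P}\mbf{P} = \mbf{P}$. Then $\mbf{R}\mbf{P} = \mbf{P} - \tfrac{1}{\alpha}\mbf{P}\mbf{x}\mbf{x}^{T}\mbf{P}\mbf{P} = \mbf{P} - \tfrac{1}{\alpha}\mbf{P}\mbf{x}\mbf{x}^{T}\mbf{P}$, which is the same expression. Hence $\mbf{P}\mbf{R} = \mbf{R}\mbf{P}$, so the two projectors commute. By Theorem~\ref{theo:product_of_projections} their product is therefore itself a projector, and it maps onto $\mathcal{L}_{\mbf{P}} \cap \mathcal{L}_{(\mbf{P}\mbf{x})^{\perp}}$ — in other words $\mathcal{L}_{\mbf{P}\mbf{R}} = \mathcal{L}_{\mbf{P}} \cap \mathcal{L}_{(\mbf{P}\mbf{x})^{\perp}}$, which is exactly the second condition in Tjur's Lemma. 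Applying the Lemma concludes that $\mathcal{L}_{\mbf{P}}$ and $\mathcal{L}_{(\mbf{P}\mbf{x})^{\perp}}$ are geometrically orthogonal.

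Alternatively — and this is perhaps cleaner — one can argue directly from the definition of geometric orthogonality. Set $\mathcal{M} = \mathcal{L}_{\mbf{P}} \cap \mathcal{L}_{(\mbf{P}\mbf{x})^{\perp}}$. Inside $\mathcal{L}_{\mbf{P}}$, the orthogonal complement of $\mathcal{M}$ is spanned by $\mbf{P}\mbf{x}$ (a single vector, since intersecting with one hyperplane drops dimension by at most one, and $\mbf{P}\mbf{x} \in \mathcal{L}_{\mbf{P}}$ is orthogonal to everything in $\mathcal{M}$); call this $\mathcal{C}_1 = \mathrm{span}\{\mbf{P}\mbf{x}\}$. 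Inside $\mathcal{L}_{(\mbf{P}\mbf{x})^{\perp}}$, the orthogonal complement of $\mathcal{M}$ is $\mathcal{C}_2 = \mathcal{L}_{\mbf{P}}^{\perp} \ominus (\text{nothing})$ — more carefully, $\mathcal{C}_2$ consists of the vectors of $\mathcal{L}_{(\mbf{P}\mbf{x})^{\perp}}$ orthogonal to $\mathcal{M}$, and one checks $\mathcal{C}_2 = \mathcal{L}_{\mbf{P}}^{\perp}$ (every vector orthogonal to $\mathcal{L}_{\mbf{P}}$ is automatically orthogonal to $\mbf{P}\mbf{x}$, hence lies in $\mathcal{L}_{(\mbf{P}\mbf{x})^{\perp}}$, and is orthogonal to $\mathcal{M} \subseteq \mathcal{L}_{\mbf{P}}$). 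Then $\mathcal{C}_1 = \mathrm{span}\{\mbf{P}\mbf{x}\} \subseteq \mathcal{L}_{\mbf{P}}$ is orthogonal to $\mathcal{C}_2 = \mathcal{L}_{\mbf{P}}^{\perp}$ by definition of the orthogonal complement, giving the required decomposition.

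I expect the only real friction to be bookkeeping about dimensions in the second approach — in particular making the claim ``$\mathcal{C}_1$ is spanned by $\mbf{P}\mbf{x}$'' fully rigorous requires checking that $\mbf{P}\mbf{x} \notin \mathcal{M}$, which holds precisely because $\braket{\mbf{P}\mbf{x}, \mbf{P}\mbf{x}} = \alpha > 0$, so $\mbf{P}\mbf{x}$ is not in $\mathcal{L}_{(\mbf{P}\mbf{x})^{\perp}}$. For that reason the first approach via Tjur's Lemma is the safer route: the commutation computation is a two-line algebraic identity that uses nothing beyond $\mbf{P}^2 = \mbf{P}$, and Theorem~\ref{theo:product_of_projections} then hands us the subspace-of-the-product condition essentially for free. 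The degenerate case $\alpha = 0$ should be disposed of first as a one-line remark.
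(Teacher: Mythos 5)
Your first route is correct and is essentially the paper's argument: the paper simply invokes Proposition~\ref{prop:intersection_limit} (which already gives that $\mbf{P}\bigl(\mbf{I}-\tfrac{1}{\alpha}\mbf{P}\mbf{x}\mbf{x}^{T}\mbf{P}\bigr)$, i.e.\ the product of the two projectors, is the projector onto $\mathcal{L}_{\mbf{P}}\cap\mathcal{L}_{(\mbf{P}\mbf{x})^{\perp}}$) together with Tjur's Lemma, while you re-derive the commutation $\mbf{P}\mbf{R}=\mbf{R}\mbf{P}$ inline from $\mbf{P}^2=\mbf{P}$ and then apply Theorem~\ref{theo:product_of_projections} and the Lemma, which is the same content made explicit. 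Your handling of the degenerate case $\alpha=0$ and the alternative direct-decomposition sketch are fine additions but not needed beyond this.
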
  
\begin{proof}
It is a simple Corollary of Proposition \ref{prop:intersection_limit}.
\end{proof}

\section{Selecting variables in RKHS}\label{sec:kernelized}

In order to take into account non-linear correlations in the data, we propose a kernelized adaptation of the problem. 
Kernel methods are a group of varied machine learning models, taking
advantage of a symmetric and positive semi-definite kernel function
comparing data samples (sets of features) $k:\mathcal{X}\times
\mathcal{X}\rightarrow \mathbb{R}$. The usage of a kernel function
allows including non-linearity to the models implicitly via a feature
map $\varphi:\mathcal{X}\rightarrow\mathcal{F}_k$: a kernel evaluated
with two samples corresponds to an inner product in this so-called
feature space (more specifically reproducing kernel Hilbert space, RKHS): $k(x,z) = \langle
\varphi(x),\varphi(z)\rangle_{\mathcal{F}_k}$. 
For more thorough introduction to traditional kernel methods, we refer the reader e.g. to~\cite{hofmann2008kernel}.

We here propose to kernelize the variable representation.
We consider $\phi: \mathbb{R}^m
\rightarrow \mathcal{H}$, where $\mathbb{R}^m$ is the vector space containing all columns of $\mbf{Y}\in\mathbb{R}^{m\times n_y}$ and $\mbf{X}\in\mathbb{R}^{m\times n_x}$, and $\mathcal{H}$ is a RKHS. In essence, this corresponds to defining a kernel on the variable vectors, $\kappa:\mathbb{R}^m\times \mathbb{R}^m\rightarrow\mathbb{R}$ -- in fact, we assume that the $\phi$ is only given implicitly via $\kappa$.
In mathematical sense, this 
matrix can equally well be considered to be a kernel matrix, since distinction between the rows and columns is by convention only. Usually however the matrix built from inner products between the variables is referred to as covariance operator. The covariance operators are also extended to RKHS with various applications in machine learning tasks~\cite{muandet2017kernel,minh2016approximate}. Contrary to our approach, there the feature map and kernel are defined on data space $\mathcal{X}$ instead of variable space $\mathbb{R}^m$. 
We need to mention here also the Gaussian Process Regression, \cite{10.5555/1162254} where kernels are also used to cover the covariance matrix, thus connecting the variables via inner product. 

We highlight that as the kernel is defined on variables, we can easily evaluate $\kappa(\mathbf{x}_i, \mathbf{y}_j)$.
We use the following shorthands for feature and kernel evaluations on the available training data: $\phi(\mbf{Y}) = [\phi(\mbf{y}_1),\dots,\phi(\mbf{y}_{n_y})]$ with $\phi(\mbf{y}_i), i\in[n_y]$ a column vector, and $\kappa(\mbf{Y}, \mbf{x}) = [\kappa(\mbf{y}_1, \mbf{x}),\dots,\kappa(\mbf{y}_{n_y},\mbf{x})]^\top$ a column vector of kernel evaluations (similarly for $\phi(\mbf{X})$). 
Note that $\kappa(\mbf{Y}, \mbf{Y}) = \phi(\mbf{Y})^\top \phi(\mbf{Y})$ with this notation. We further denote $\mbf{K_y} = \kappa(\mbf{Y}, \mbf{Y})$.
We assume that $\|\phi(\mbf{x})\|=1$.

\subsection{Expressing the Projection operator in RKHS}

Based on Section~\ref{sec:matrix_represenation},  Equation~(\ref{eq:proj-singular-repr}), the projection $\mbf{P}_{Y}$ is represented with the left singular vectors of $\mbf{P}_{Y}$,  $\mbf{U}_Y$. This representation is also needed for the kernelized algorithm.
However calculating directly the singular value decomposition on  $\phi(\mbf{Y})$,
$\phi(\mbf{Y})= \mbf{U}_{Y}\mbf{S}_Y\mbf{V}_Y^T$,
might not be feasible if the dimensionality of the feature space is large.  
Assuming that $\mathcal{H}$ is finite dimensional\footnote{For clarity, we restrict the discussion to finite dimensions and $\mathcal{H} = \mathbb{R}^d$ with $d<\infty$. We note that the approach is equally valid also with infinite dimensions.} with dimension $d$,  
we have $\phi(\mbf{Y}),\mbf{U}_{Y}\in \mathbb{R}^{d\times n_y}$, and $\mbf{S}_{Y}, \mbf{V}_{Y}\in \mathbb{R}^{n_y \times n_y}$. 
Therefore we can write  
\begin{equation}
\mbf{S}_{Y} = 
\left [
\begin{array}{c}
\mbf{D}_{Y} \\
\sbf{\varnothing} 
\end{array} 
\right],
\mbf{D}_{Y} \in \mathbb{R}^{n_y \times n_y}, \sbf{\varnothing} \in [0]^{m-n_y,n_y},   
\end{equation}
and $\mbf{D}_{Y}$ diagonal with nonnegative elements of singular
values, thus $\phi(\mbf{Y})= \mbf{U}_{Y}\mbf{D}_{Y}\mbf{V}_Y^T$. 
Again, this decomposition can not be computed directly, however we can go on the
following line of computation.

To express the $\mbf{U}_Y$ we can apply a similar approach to what is exploited
in the computation of the kernel principal component analysis~\cite{10.1162/089976698300017467}.   
Recall that the kernel matrix on columns of $\phi(\mbf{Y})$ is $\mbf{K}_{Y} =
\phi(\mbf{Y})^{T}\phi(\mbf{Y})$. From the singular value decomposition
we can derive that $\mbf{K}_{Y} =
\mbf{V}_{Y}\mbf{S}_{Y}^{2}\mbf{V}^{T}$. This kernel has a
reasonably small size, $n_y \times n_y$, thus its eigenvalue
decomposition can be computed, which yields $\mbf{V}_{Y}$ and the
squares of the singular values of the diagonal elements of $\mbf{S}_{Y}$. By
combining these expressions we have 
\begin{equation}
\phi(\mbf{Y})\mbf{V}_{Y}\mbf{S}_{Y} = \mbf{U}_{Y}\mbf{S}_{Y}^{2}\ \Rightarrow \
\fbox{$ \mbf{U}_{Y} = \phi(\mbf{Y})\mbf{V}_{Y}\mbf{S}_{Y}^{+} $}
\end{equation}
with the help of the Moore-Penrose generalized inverse. 
Our algorithm hinges on evaluating products between projectors and the variable vectors.
We can now write the products of the $\mbf{U}_{Y}^{T}$ with an
arbitrary vector represented in $\mathcal{H}$ as
\begin{equation}
\label{eq:implicit2kernel}
\begin{array}{ll}
\fbox{$ \mbf{U}_{Y}^{T}\phi(\mbf{x}) $} =
  \mbf{S}_{Y}^{-1}\mbf{V}_{Y}^{T}\phi(\mbf{Y})^{T}\phi(\mbf{x}) 
  = \fbox{ $ \mbf{S}_{Y}^{-1}\mbf{V}_{Y}^{T} \kappa(\mbf{Y},\mbf{x})
    $}.
\end{array}
\end{equation}
Thus the product can be expressed with the help of the kernel on the variables with complexity $O(n_y^2)$ if the $\mbf{K}_{Y}$,
$\mbf{V}_{Y}$ and $\mbf{S}_{Y}^{-1}$ are precomputed. 

\subsection{The recursive selection procedure}
 
To calculate the projection operator efficiently in each iteration we
can exploit the structure of $\mbf{P}_{\mathcal{L}_{Y} \cap
\mathcal{L}_{\tilde{X}_{t}^{\perp}}}\phi(\mbf{x})$ introduced in  Proposition
\ref{prop:intersection_limit}.   To this end, we define 
an intermediate operator, projection into the complement
subspace of vector $\mbf{q}\in\mathbb{R}^n$ as:
\begin{equation}
\label{eq:proj_to_1d_complement}
\mbf{Q}(\mbf{q}) = \mbf{I} - \dfrac{\mbf{q}\mbf{q}^{T}}{||\mbf{q}||^2}.
\end{equation} 
Since $\mbf{Q}(\mbf{q})$ is a projection, we have
$\mbf{Q}(\mbf{q})=\mbf{Q}(\mbf{q})\mbf{Q}(\mbf{q})$ and $\mbf{Q}(\mbf{q}) = \mbf{Q}(\mbf{q})^{T}$. 
It can also be seen that multiplying $\mbf{Q}$ with a matrix
$\mbf{A}\in\mathbb{R}^{n\times n}$,
\begin{equation}
\mbf{Q}(\mbf{q})\mbf{A} = \left( \mbf{I} -
  \dfrac{\mbf{q}\mbf{q}^{T}}{||\mbf{q}||^2}\right) \mbf{A}  
  = \mbf{A} -
  \dfrac{\mbf{q}(\mbf{q}^{T}\mbf{A})}{||\mbf{q}||^2}, 
\end{equation}  
 has the complexity of only $O(n^2)$ since only matrix-vector and outer product are needed.  
We are also going to use the following recurrent matrix products for a fixed $t$
\begin{equation}
\tilde{\mbf{U}}_{t} = \mbf{U}_{Y} \prod_{s=1}^{t-1} \mbf{Q}(\mbf{q}_s)
= \tilde{\mbf{U}}_{t-1}\mbf{Q}(\mbf{q}_{t-1}).  
\end{equation}
Now we can write up the sequence of projections corresponding to the
Algorithm (\ref{fig:selection_algorithm}):

\smallskip  

\begin{center}
  \fbox{
\begin{minipage}{0.97\linewidth}  
$
\arraycolsep=1.2pt\def\arraystretch{1.05}
\begin{array}{@{}l@{\:}l@{}}
\text{Let} & \mbf{U}_0 = \mbf{U}_{Y},\ \mathcal{I}_0=\emptyset, \\  
\fbox{$\mbf{P}_{0}$} & =  \mbf{P}_{\phi(Y)}  
= \fbox{$ \mbf{U}_{0}\mbf{U}_{0}^{T} $}, \\
\mbf{q}_{1} & = \mbf{U}_0^{T}\phi(\mbf{x}_{k_{1*}}),\  k_{1*} =\arg\max_{k\in 
  [n_x]\setminus \mathcal{I}_{0}} ||\mbf{P}_0\phi(\mbf{x}_k)||^2,\
              \mathcal{I}_1 =\mathcal{I}_0 \cup {k_{1^*}}, \\ 
\fbox{$ \mbf{P}_1 $}& = \mbf{U}_{0}\mbf{U}_{0}^{T} 
- \dfrac{\mbf{U}_{0}\mbf{q}_1\mbf{q}_{1}^{T}\mbf{U}_{0}^{T}} 
{||\mbf{q}_1||^{2}} = \mbf{U}_{0}
 \mbf{Q}(\mbf{q}_1)\mbf{U}_{0}^{T}
 = \mbf{U}_{0}
 \mbf{Q}(\mbf{q}_1)\mbf{Q}(\mbf{q}_1)\mbf{U}_{0}^{T} 
= \fbox{$
       \mbf{U}_{1}\mbf{U}_{1}^{T} $}, 
\\
\mbf{q}_{2} & = \mbf{U}_1^{T}\phi(\mbf{x}_{k_{2*}}),\  k_{2*} =\arg\max_{k\in
  [n_x]\setminus \mathcal{I}_{1}} ||\mbf{P}_1\phi(\mbf{x}_k)||^2,\ \mathcal{I}_2 =\mathcal{I}_1 \cup {k_{2^*}}, \\
\vdots \\ 
\fbox{$ \mbf{P}_t $} & = \mbf{U}_{t-1}\mbf{U}_{t-1}^{T}  
- \dfrac{\mbf{U}_{t-1}\mbf{q}_t
  \mbf{q}_{t}^{T}\mbf{U}_{t-1}^{T}}   
{||\mbf{q}_t||^{2}} 
=  \mbf{U}_{t-1}
  \mbf{Q}(\mbf{q}_t)\mbf{U}_{t-1}^{T} \\
& = \mbf{U}_{t-1}
 \mbf{Q}(\mbf{q}_t)\mbf{Q}(\mbf{q}_t)
       \mbf{U}_{t-1}^{T} = \fbox{$
       \mbf{U}_{t}\mbf{U}_{t}^{T} $}, 
\\
\mbf{q}_{t+1} & = \mbf{U}_{t}^{T}\phi(\mbf{x}_{k_{(t+1)*}}),\
                k_{(t+1)*} =\arg\max_{k\in 
  [n_x]\setminus \mathcal{I}_{t}} ||\mbf{P}_t\phi(\mbf{x}_k)||^2,\ \mathcal{I}_{t+1} =\mathcal{I}_{t} \cup {k_{(t+1)^*}}, \\
\vdots 
\end{array}
$
\end{minipage}
}

\end{center}

\medskip

\begin{proposition}
\label{prop:recursive_projection}  
The sequence of projections above correctly
computes the projection operators of Algorithm
in Figure \ref{fig:selection_algorithm}.  
\end{proposition}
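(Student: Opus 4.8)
The plan is to prove the proposition by induction on the iteration index $t$, carrying along a joint invariant: (i) the matrix $\mbf{U}_t$ produced by the recursion satisfies $\mbf{U}_t\mbf{U}_t^{T}=\mbf{P}_t$; (ii) $\mbf{P}_t=\mbf{P}_{\mathcal{L}_{Y}\cap\mathcal{L}_{\tilde{X}_t^{\perp}}}$, the projector required at step $t$ of Figure~\ref{fig:selection_algorithm}; and (iii) $\mathcal{I}_t$ and $\tilde{\mbf{X}}_t$ coincide with the objects produced by that algorithm. The base case $t=0$ is immediate: $\tilde{X}_0=\emptyset$ gives $\mathcal{L}_{\tilde{X}_0^{\perp}}=\mathcal{H}$, so $\mathcal{L}_{Y}\cap\mathcal{H}=\mathcal{L}_{Y}$, and by eq.~(\ref{eq:proj-singular-repr}) $\mbf{P}_{\phi(Y)}=\mbf{U}_{Y}\mbf{U}_{Y}^{T}$ with $\mbf{U}_0=\mbf{U}_{Y}$ spanning $\mathcal{L}_{Y}$.

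For the inductive step I would assume the invariant at $t-1$. Since $\mbf{P}_{t-1}$ is exactly the projector used by the algorithm at that stage, the $\argmax$ defining $k_{t*}$ in the recursion returns the same index as step~2(b), so $\mathcal{I}_t$ and $\tilde{\mbf{X}}_t$ agree with the algorithm; write $\mbf{x}:=\phi(\mbf{x}_{k_{t*}})$. Because $\mathcal{L}_{\tilde{X}_t}=\mathcal{L}_{\tilde{X}_{t-1}}+\mathrm{span}\{\mbf{x}\}$, taking orthogonal complements yields $\mathcal{L}_{\tilde{X}_t^{\perp}}=\mathcal{L}_{\tilde{X}_{t-1}^{\perp}}\cap\mathcal{L}_{\mbf{x}^{\perp}}$, hence by the inductive hypothesis
\[
\mathcal{L}_{Y}\cap\mathcal{L}_{\tilde{X}_t^{\perp}}=\bigl(\mathcal{L}_{Y}\cap\mathcal{L}_{\tilde{X}_{t-1}^{\perp}}\bigr)\cap\mathcal{L}_{\mbf{x}^{\perp}}=\mathcal{L}_{\mbf{P}_{t-1}}\cap\mathcal{L}_{\mbf{x}^{\perp}}.
\]
Applying Lemma~\ref{lemma:x_Px} and Proposition~\ref{prop:intersection_limit} with the projector $\mbf{P}:=\mbf{P}_{t-1}$ and the unit vector $\mbf{x}$ then identifies the projector of this intersection with $\mbf{P}_{t-1}-\tfrac{1}{\alpha}\mbf{P}_{t-1}\mbf{x}\mbf{x}^{T}\mbf{P}_{t-1}$, where $\alpha=\mbf{x}^{T}\mbf{P}_{t-1}\mbf{x}$.

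It remains to match this with the recursion's $\mbf{U}_{t-1}\mbf{Q}(\mbf{q}_t)\mbf{U}_{t-1}^{T}$. Substituting $\mbf{q}_t=\mbf{U}_{t-1}^{T}\mbf{x}$ and $\mbf{Q}(\mbf{q}_t)=\mbf{I}-\mbf{q}_t\mbf{q}_t^{T}/\|\mbf{q}_t\|^{2}$, and using $\mbf{U}_{t-1}\mbf{U}_{t-1}^{T}=\mbf{P}_{t-1}$ from the hypothesis (so $\mbf{U}_{t-1}\mbf{q}_t=\mbf{P}_{t-1}\mbf{x}$ and $\|\mbf{q}_t\|^{2}=\mbf{x}^{T}\mbf{P}_{t-1}\mbf{x}=\alpha$), the expression collapses termwise to $\mbf{P}_{t-1}-\tfrac{1}{\alpha}\mbf{P}_{t-1}\mbf{x}\mbf{x}^{T}\mbf{P}_{t-1}=\mbf{P}_t$. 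Setting $\mbf{U}_t:=\mbf{U}_{t-1}\mbf{Q}(\mbf{q}_t)$ and using that $\mbf{Q}(\mbf{q}_t)$ is idempotent and self-adjoint gives $\mbf{U}_t\mbf{U}_t^{T}=\mbf{U}_{t-1}\mbf{Q}(\mbf{q}_t)\mbf{Q}(\mbf{q}_t)\mbf{U}_{t-1}^{T}=\mbf{U}_{t-1}\mbf{Q}(\mbf{q}_t)\mbf{U}_{t-1}^{T}=\mbf{P}_t$, and $\mbf{P}_t\mbf{U}_t=\mbf{U}_t$ shows the columns of $\mbf{U}_t$ lie in $\mathcal{L}_{Y}\cap\mathcal{L}_{\tilde{X}_t^{\perp}}$, re-establishing the invariant. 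The degenerate case $\alpha=0$ (every remaining candidate orthogonal to $\mathcal{L}_{\mbf{P}_{t-1}}$) is handled by the convention $\mbf{Q}(\mbf{0})=\mbf{I}$, giving $\mbf{P}_t=\mbf{P}_{t-1}$, which is precisely what Proposition~\ref{prop:intersection_limit} prescribes there.

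I expect the main obstacle to be the bookkeeping that makes this a genuinely \emph{coupled} induction: $\mbf{P}_{t-1}$ controls which index $k_{t*}$ is selected, and that choice determines the very subspace $\mathcal{L}_{\tilde{X}_t^{\perp}}$ whose projector the recursion must reproduce, so one must verify that the hypotheses of Proposition~\ref{prop:intersection_limit} ($\mbf{P}_{t-1}$ a genuine orthogonal projector, $\mbf{x}$ a unit vector of $\mathcal{H}$) hold at \emph{every} step before the closed form may be invoked, and keep track that the low-rank factor $\mbf{U}_t$ — whose columns are no longer orthonormal — still satisfies $\mbf{U}_t\mbf{U}_t^{T}=\mbf{P}_t$, so that $\mbf{q}_{t+1}=\mbf{U}_t^{T}\phi(\mbf{x}_{k_{(t+1)*}})$ and $\|\mbf{q}_{t+1}\|^{2}=\|\mbf{P}_t\phi(\mbf{x}_{k_{(t+1)*}})\|^{2}$ feed correctly into the next iteration.
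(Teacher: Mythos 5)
Your proof is correct and follows essentially the same route as the paper: induction on $t$, invoking Proposition~\ref{prop:intersection_limit} at each step and collapsing the rank-one update through $\mbf{U}_{t-1}\mbf{U}_{t-1}^{T}=\mbf{P}_{t-1}$, $\mbf{q}_t=\mbf{U}_{t-1}^{T}\phi(\mbf{x}_{k_{t*}})$ and the idempotence of $\mbf{Q}(\mbf{q}_t)$. The only difference is that you additionally spell out, via $\mathcal{L}_{\tilde{X}_t^{\perp}}=\mathcal{L}_{\tilde{X}_{t-1}^{\perp}}\cap\mathcal{L}_{\mbf{x}^{\perp}}$ and Lemma~\ref{lemma:x_Px}, why the one-vector-at-a-time updates reproduce the projector onto $\mathcal{L}_{Y}\cap\mathcal{L}_{\tilde{X}_t^{\perp}}$ required by the algorithm of Figure~\ref{fig:selection_algorithm}, a step the paper's calculation leaves implicit.
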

\begin{proof}
We apply induction on $t$ to prove the statement. In case of $t=1$ we
have by Proposition \ref{prop:intersection_limit}, that
 \begin{equation}
\begin{array}{@{}l@{\;}l@{}}
\mbf{P}_1 & = \mbf{P}_{0} -
 \dfrac{\mbf{P}_{0}\phi(\mbf{x}_{k_{1*}})\phi(\mbf{x}_{k_{1*}})^{T}\mbf{P}_{0}}
{||\mbf{P}_0\phi(\mbf{x}_{k_{1*}})||}  
 = \mbf{U}_{0}\mbf{U}_{0}^{T} 
- \dfrac{\mbf{U}_{0}\mbf{U}_{0}^{T}
 \phi(\mbf{x}_{k_{1*}})\phi(\mbf{x}_{k_{1*}})^{T}\mbf{U}_{0}\mbf{U}_{0}^{T}}  
{||\mbf{U}_{0}\mbf{U}_{0}^{T}\phi(\mbf{x}_{k_{1*})})||^{2}} \\ 
&
= \mbf{U}_{0}\left( \mbf{I} -                                                       
  \dfrac{\mbf{q}_1\mbf{q}_1^{T}}{||\mbf{q}_1||^{2}} \right)\mbf{U}_{0}^{T}  
= \mbf{U}_{0}\mbf{Q}(\mbf{q}_1)\mbf{U}_{0}^{T} =
\mbf{U}_{0}\mbf{Q}(\mbf{q}_1)\mbf{Q}(\mbf{q}_1)\mbf{U}_{0}^{T}  
 =\fbox{$ \mbf{U}_{1}\mbf{U}_{1}^{T}$}. \\ 
\end{array}
\end{equation}
In transforming
$||\mbf{U}_{0}\mbf{U}_{0}^{T}\phi(\mbf{x})_{t_{1*}}||^{2}$
into $||\mbf{q}_{1}||^{2}$ we exploited that
$\mbf{U}_{0}\mbf{U}_{0}^{T}$ is a projection, hence it
is idempotent. 
Let $t>1$ be arbitrary. Suppose that 
\begin{equation}
\begin{array}{l}
\mbf{P}_t = \mbf{U}_{t-1}\mbf{U}_{t-1}^{T}  
- \dfrac{\mbf{U}_{t-1}\mbf{q}_{t}\mbf{q}_{t}^{T}\mbf{U}_{t-1}^{T}}   
{||\mbf{q}_t||^{2}} 
=  \fbox{$ \mbf{U}_t\mbf{U}_t^{T} $}, \\
\end{array}
\end{equation}
holds true. 
Now, computing the projector $t+1$ we obtain
\begin{equation*}
\begin{array}{@{}l@{\;}l@{}}
\mbf{P}_{t+1} & = \mbf{P}_{t} 
- \dfrac{\mbf{P}_{t}\phi(\mbf{x}_{k_{(t+1)*}})\phi(\mbf{x}_{k_{(t+1)*}})^{T}\mbf{P}_{t}}
{||\mbf{P}_t\phi(\mbf{x}_{k_{(t+1)*}})||^2} \\  
& = \mbf{U}_{t}\mbf{U}_{t}^{T} 
- \dfrac{\mbf{U}_{t}\mbf{U}_{t}^{T}\phi(\mbf{x}_{k_{(t+1)*}})
\phi(\mbf{x}_{k_{(t+1)*}})^{T}\mbf{U}_{t}\mbf{U}_{t}^{T}}
{||\mbf{U}_{t}\mbf{U}_{t}^{T}\phi(\mbf{x}_{k_{(t+1)*}})||^{2}} = \mbf{U}_t\left(\mbf{I} - 
\dfrac{\mbf{q}_{t+1}\mbf{q}_{t+1}}
{||\mbf{q}_{t+1}||^{2}}\right) \mbf{U}_t^{T} \\
&  = \mbf{U}_t \mbf{Q}(\mbf{q}_{t+1}) \mbf{U}_t^{T} = \mbf{U}_t \mbf{Q}(\mbf{q}_{t+1}) \mbf{Q}(\mbf{q}_{t+1})\mbf{U}_t^{T}
= \fbox{$ \mbf{U}_{t+1} \mbf{U}_{t+1}^{T} $}.
\end{array}
\end{equation*}
In the norm we again applied that
$\mbf{U}_{t}\mbf{U}_{t}^{T}$ is idempotent. 
\end{proof}

\begin{figure}[tb]
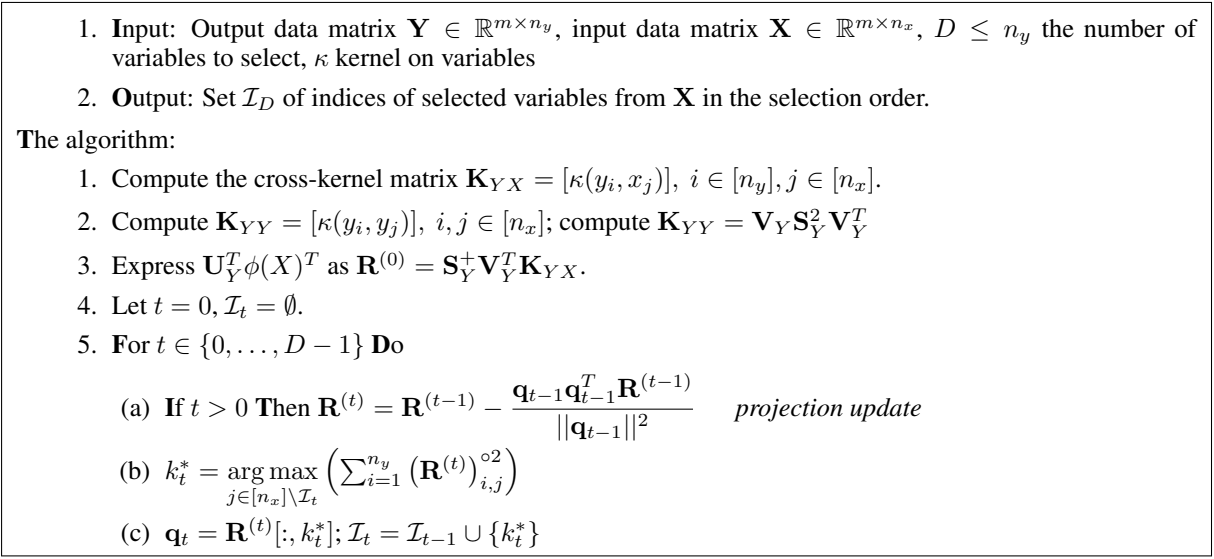

\fbox{
\begin{minipage}{0.95\linewidth}
\begin{enumerate}
\item  
{\textbf Input:} Output data matrix $\mbf{Y}\in \mathbb{R}^{m \times n_y}$, input data matrix $\mbf{X}
  \in \mathbb{R}^{m \times n_x}$, $D \le n_y$ the number of variables to select, $\kappa$ kernel on variables  
\item {\textbf Output:} Set $\mathcal{I}_{D}$ of indices of selected variables from $\mbf{X}$ in the selection order. 
\end{enumerate}
{\textbf The algorithm:}
\begin{enumerate}
\item Compute the cross-kernel matrix $\mbf{K}_{YX} = [\kappa(y_i,x_j)],\ i \in
  [n_y], j \in [n_x]$.
\item Compute  $\mbf{K}_{YY} = [\kappa(y_i,y_j)],\ i, j \in [n_x]$; compute $\mbf{K}_{YY} = \mbf{V}_{Y}\mbf{S}_Y^2\mbf{V}_{Y}^T$  
\item Express $\mbf{U}_{Y}^{T}\phi(X)^{T}$ as
  $\mbf{R}^{(0)}= \mbf{S}_{Y}^{+}\mbf{V}_{Y}^{T}\mbf{K}_{YX}$. 
\item Let $t=0$, $\mathcal{I}_{t}= \emptyset$.
\item {\textbf For} $t \in \{0,\dots,D-1\}$ {\textbf Do}
\begin{enumerate}
\item {\textbf If} $t>0$ {\textbf Then} $\mbf{R}^{(t)} = \mbf{R}^{(t-1)} -
  \dfrac{\mbf{q}_{t-1}\mbf{q}_{t-1}^{T}\mbf{R}^{(t-1)}}{||\mbf{q}_{t-1}||^2}$
  \ \quad  \textit{projection update}
\item  $k_t^{*} = \argmax\limits_{j \in [n_x]\setminus
  \mathcal{I}_{t}}\left(\sum_{i=1}^{n_y}\left(\mbf{R}^{(t)}\right)_{i,j}^{\circ 2}\right)$   
\item   $\mbf{q}_t= \mbf{R}^{(t)}[:,k^{*}_t]$; $\mathcal{I}_{t} = \mathcal{I}_{t-1} \cup \{k_t^{*}\}$
\end{enumerate}

\end{enumerate}

\end{minipage}
}
\caption{Efficient implementation of the kernelized realization of supervised variable selection by projection algorithm, \vasp{}. Note the notation, e.g. $\mbf{R}^{(t)} = \mbf{U}_t\phi(X)$. }
\label{fig:selection_algorithm_final}


\end{figure}

We can express the main computation step, Step 2.b in
Algorithm \ref{fig:selection_algorithm}, by exploiting the kernelized
recursive iteration. From the sequential procedure we can see that a 
key step of the computation is the calculation of the vectors
$\mbf{q}_i$ via Equation (\ref{eq:implicit2kernel}), $\mbf{U}_{Y}^{T}\phi(\mbf{x})=\mbf{S}_{Y}^{-1}\mbf{V}_{Y}^{T}\kappa(\mbf{Y},\mbf{x})$
for an arbitrary $\phi(\mbf{x})\in \mathcal{H}$. In iteration $t$, we have 
\begin{equation}
\begin{array}{@{}l@{}l@{}}
\mbf{q}_{t+1} = \mbf{U}_t^{T}\phi(\mbf{x}) = \left(
  \mbf{U}_{Y}\prod_{s=1}^{t-1}\mbf{Q}(\mbf{q}_s) \right)^{T}
\phi(\mbf{x})  
= \mbf{Q}(\mbf{q}_t)\cdot \dots \cdot
\mbf{Q}(\mbf{q}_1)\underbrace{\mbf{U}_{Y}^{T}\phi(\mbf{x})}_{\mbf{S}_{Y}^{-1}\mbf{V}_{Y}^{T}\kappa(\mbf{Y},\mbf{x})}. 
\end{array}
\end{equation}
Taking advantage of the recursive definition of
 $\mbf{U}_t^{T}\phi(\mbf{x})$ 
we also have that
\begin{equation}
\begin{array}{ll}
\mbf{U}_{t+1}^{T}\phi(\mbf{x}) 
& = \mbf{Q}(\mbf{q}_{t+1})\mbf{U}_{t}^{T}\phi(\mbf{x})
= \left(\mbf{I} -
\dfrac{\mbf{q}_{t+1}\mbf{q}_{t+1}^T}{||\mbf{q}_{t+1}||^2} \right)
\mbf{U}_{t}^{T}\phi(\mbf{x}),    
\end{array}
\end{equation}
where $\mbf{q}_{t+1} =
\mbf{U}_{t}^{T}\phi(\mbf{x}_{k_{(t+1)*}})$, thus all terms
relate to those computed in the previous iteration. 
The computation of the norm $||\mbf{q}_{t+1}||^2$ can also
  exploit the recursive nature  of the algorithm. 
Finally, all the feature representations $\phi(\mbf{x})$ and
$\phi(\mbf{Y})$ are implicit, and are only expressed via kernel
evaluations since they only appear in inner products.

Based on these statements and Proposition
\ref{prop:recursive_projection} we can present a concrete
implementation of our algorithm in Figure
\ref{fig:selection_algorithm_final}.  
%
In the first step the kernels are computed, where $\mbf{K}_{YX}$ requires
$O(mn_yn_x)$, and $\mbf{K}_{Y}$ $O(mn_y^2)$ operations in case of for example linear and Gaussian kernels. For the eigenvalue
decomposition of $\mbf{K}_{Y}$ we need 
$O(n_y^3)$ operations, where $D\le \min(n_y,n_x)$. In the algorithm,
the critical step is Step 4.a. Its complexity in step $t$ is $O(n_y(n_x-t))$, thus, in general for
selecting $D$ variables we need $O(n_yn_xD)$ operations. 
Assuming that $m\gg n_y,n_x$ , the dominating part is the computation of the kernels,
thus the entire complexity is equal to $O(mn_y\max(n_x,n_y))$.

\section{Experiments}
\label{sec:experiments}

In this section we experimentally validate our approach.\footnote{The code for the algorithm is available \url{https://github.com/aalto-ics-kepaco/ProjSe}.}~\footnote{The experiments are run on a machine with this parameters: 12th Gen Intel Core$^{TM}$ i5-12600K * 10.}
We first show demonstrations on our algorithm's scalability on synthetic data, before moving on to experimenting with real data and analysing the stability of the feature selection. 

\subsection{Scalability demonstration with synthetic data}

This test is implemented via a
scheme presented by (\ref{eq:data_generation}) in
Figure~\ref{fig:selection_time_varying_sample_size} and by (\ref{eq:data_generation_full}). The components of the
input matrix, $\mbf{X}$ and the components of a transformation matrix
$\mbf{W}$ are independently sampled from normal distribution. Then
output matrix is constructed, and finally random noise is added to the output.    
\begin{equation}
  \label{eq:data_generation_full}
\begin{array}{l@{\ }c@{\ }c@{\ }c@{\ }c}
\text{Input} & & \text{Linear transformation} & & \text{Noise} \\
  \midrule 
& & \mbf{W} \sim [\mathcal{N}(0,\sigma)]^{n_x\times n_y} 
& & \mbf{E} \sim [\mathcal{N}(0,\sigma)]^{m \times n_y} \\
& & \Downarrow & & \Downarrow \\
\mbf{X} \sim [\mathcal{N}(0,\sigma)]^{m\times n_x} & \Longrightarrow  
& \mbf{Y} = \mbf{X}\mbf{W} & \Longrightarrow 
& \tilde{\mbf{Y}} = \mbf{Y}+\mbf{E}. 
\end{array} 
\end{equation}

We apply \vasp{} to this data with various sample sizes. Figure \ref{fig:selection_time_varying_sample_size} presents the
dependence of the selection time on the sample size, where the maximum sample size is
$10$ million and the number of variables is $10$ --  the variable selection is performed in less than four seconds. 

\begin{figure}[tb]
  \begin{center}
  \begin{tabular}{@{}cc@{}}
    \begin{minipage}{0.4\linewidth}
    \includegraphics[width=2.0in]{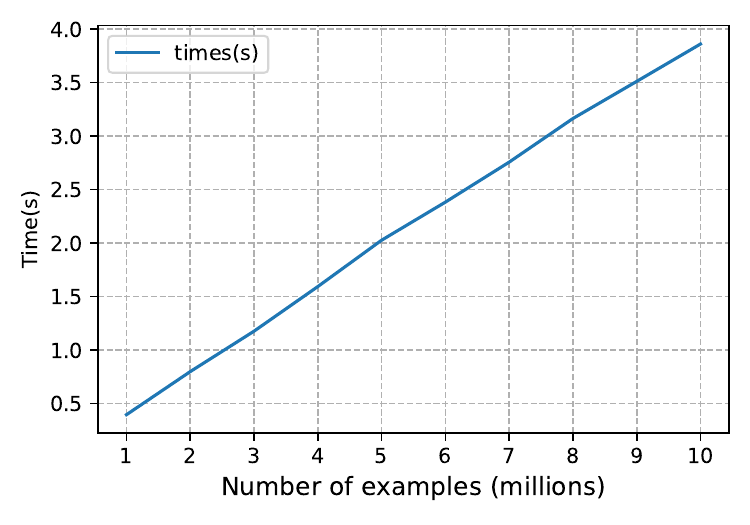}
    \end{minipage}
   &
   \begin{minipage}{0.55\linewidth}
\begin{equation}
  \label{eq:data_generation}
\renewcommand{\arraystretch}{1.2}
\begin{array}{ll}
\mbf{Y} = \mbf{X}\mbf{W} + \mbf{E} \\ \hline
\mbf{X} \sim [\mathcal{N}(0,\sigma)]^{m\times n_x},  & m = 1,\dots,10 \ \text{million},\\
\mbf{W} \sim [\mathcal{N}(0,\sigma)]^{n_x\times n_y}, & n_x = 100,  \\ 
\mbf{E} \sim [\mathcal{N}(0,\sigma)]^{m\times n_y}, & n_y = 100. \\
\end{array} 
\renewcommand{\arraystretch}{1.0}
\end{equation}
  \end{minipage}
  \end{tabular}
  \end{center}
  \caption{The dependence of the variable selection time on the sample
  size is shown in seconds on the left, and the random data generation scheme applied is on right}
  \label{fig:selection_time_varying_sample_size}
\end{figure}

\subsection{Feature selection from biological data}

In this set of experiments, we compare our approach to~\cite{brouard2022feature} -- a kernel-based feature selection method, where kernels are considered traditionally on data samples instead of features.
We experiment with the two gene expression datasets, "Carcinoma", "Glioma", considered there for unsupervised feature selection.
While this kind of setting with one-view fat data is not the one our method was developed for, as the scalability we propose is first and foremost for large sample sizes, these experiments still serve for illustrative comparison of feature selection performance.

\begin{table}[tb]

\centering
\includegraphics[width=0.43\linewidth]{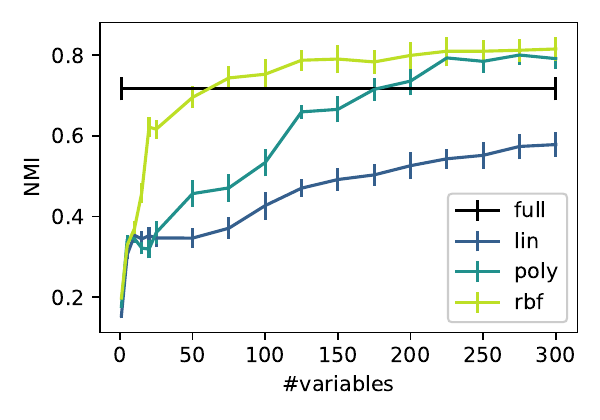}
\includegraphics[width=0.43\linewidth]{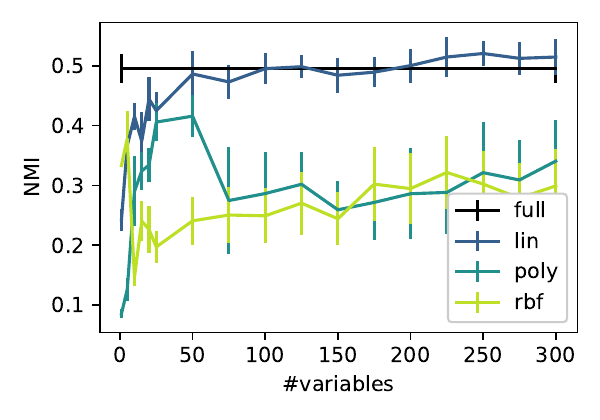}
\captionof{figure}{Clustering results (NMI) on Carcinoma (right) and Glioma (left) datasets with \vasp{} as functions of number of variables chosen, averaged over 20 runs of k-means. "Full" refers to results when full set of features is used. }\label{fig:brouard_comparison}

\captionof{table}{\vasp{} clustering results (NMI and time) with selected features (10 or 300) compared to results reported in~\cite{brouard2022feature}. Running time of \vasp{} for choosing 300 features; running time and variation of k-means is negligible. }\label{tb:brouard_comparison}

\centering
\smallskip
\footnotesize
\begin{tabular}{l|lll|lll}
\toprule 
 &\multicolumn{3}{c}{\textit{Carcinoma ($m$=174, $n_x$=9182, $C$=11)}}  & \multicolumn{3}{c}{\textit{Glioma ($m$=50, $n_x$=4434, $C$=4)}}  \\
 &NMI(10) & NMI(300) &  t (s) & NMI(10) & NMI(300) & t (s)\\
\midrule
lapl    & 0.36 (0.02)  &  0.64 (0.04)  & \textbf{0.25} (0.04)              &  \textbf{0.50} (0.03) &  0.47 (0.06)  &  \textbf{0.02} (0.00)        \\
NDFS   & 0.22 (0.28)  &  0.78 (0.03)  & 6,162 (305)             & 0.20 (0.04)  &  0.36 (0.07)  & 368 (21)          \\
UKFS  & \textbf{0.57} (0.03) &  0.75 (0.05) &  326 (52)      &  0.26 (0.05) &  0.42 (0.05) &  23.74 (4.03)     \\
\midrule
\vasp{} lin  & 0.31 (0.01) & 0.58 (0.03)  & 1,146\footnotemark[1] &   0.37 (0.01) &  \textbf{0.52 }(0.03) &  210\footnotemark[1]\\
\vasp{} poly &  0.34 (0.01) & 0.79 (0.03)  &  1,239\footnotemark[1]        &  0.13 (0.02) & 0.34 (0.07)  & 299\footnotemark[1]\\
\vasp{} RBF &  0.33 (0.01) & \textbf{0.82} (0.03)  &   1,263\footnotemark[1]       &  0.38 (0.05)& 0.23 (0.06)  &  284\footnotemark[1]\\
\bottomrule
\end{tabular}
%
\footnotetext[1]{Running time of \vasp{} for choosing 300 features; running time and variation of k-means is negligible.}


\end{table}

As the data is only available in one view in unsupervised setting, we apply our algorithm by using the data in both views: as the reference/label view and as the view the feature selection is performed on. Intuitively, this would filter out the noise and redundant features in the view.
In our method we consider linear kernel, $k(\mbf{x},\mbf{z}) = \mbf{x}^T\mbf{z}$, polynomial kernel of degree 3, $k(\mbf{x},\mbf{z}) = (\mbf{x}^T\mbf{z})^3$, and RBF kernel, $k(\mbf{x},\mbf{z}) = \exp(\|\mbf{x}-\mbf{z}\|^2/(2\sigma^2))$ with the kernel parameter $\sigma$ set as mean of pairwise distances.
We assess the performance of the feature selection by measuring the normalised mutual information~(NMI) of k-means clustering results. Here the clusterer has been given the amount of classes in the data as the number of clusters.

The results are displayed in Table~\ref{tb:brouard_comparison}, with comparison to selected methods from~\cite{brouard2022feature}: UKFS proposed there, as well as a scoring-based method "lapl"~\cite{he2005laplacian} that performed well on Glioma dataset, and NDFS~\cite{li2012unsupervised}, a clustering-based approach that performed well with Carcinoma dataset. Our method is very competitive with these, sometimes achieving better performance. As in our method the kernel is calculated on features, the running time is slower than for UKFS where the kernel on samples is considered. However notably we are still competitive when compared to the NDFS.

Additionally, Figure~\ref{fig:brouard_comparison} displays more detailed clustering results with respect to the number of variables chosen by \vasp{}.
These results also highlight the differences that can be obtained by applying different kernels on the features: with Carcinoma dataset the non-linear kernels, RBF and polynomial kernel of degree 3, are clearly superior, while with Glioma linearity works the best.

\subsection{Feature selection from vector-valued output setting}

\begin{table}[tb]

\captionof{table}{The time series classification datasets.}\label{tb:ts_data}
\centering
\begin{tabular}{lcccc}
\toprule
Dtaset  name&  \# tr. samples  & \# test samples  & \# features & \# classes\\
\midrule
Crop & 7200 &	16800  &	46 &	24\\
\makecell[l]{NonInvasiveFetalECGThorax1}&  1800 &	1965&	750 &	42   \\
ShapesAll &  600 &	600	&512 	&60\\
\bottomrule
\end{tabular}
\end{table}


\begin{figure}[tb]


\centering
\includegraphics[width=0.32\linewidth]{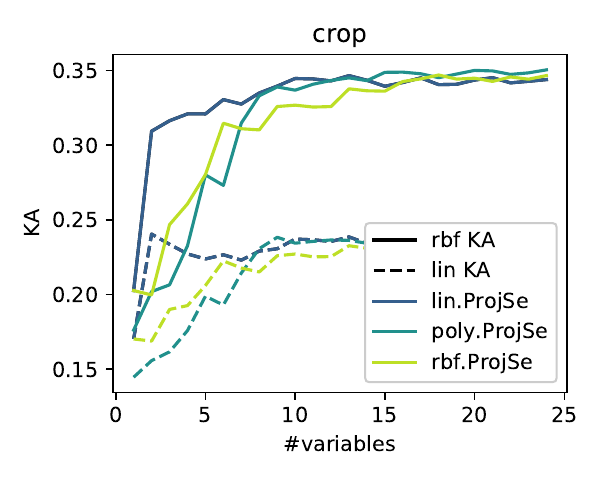}
\includegraphics[width=0.32\linewidth]{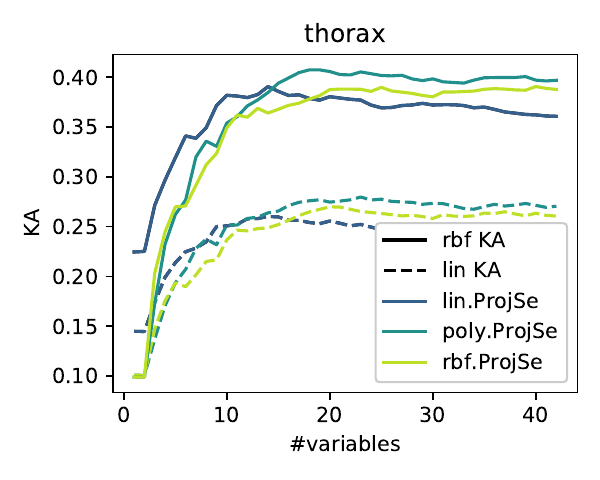}
\includegraphics[width=0.32\linewidth]{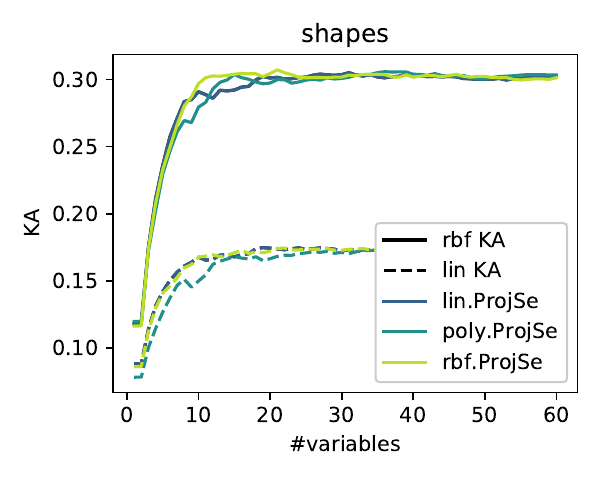}
\includegraphics[width=0.32\linewidth]{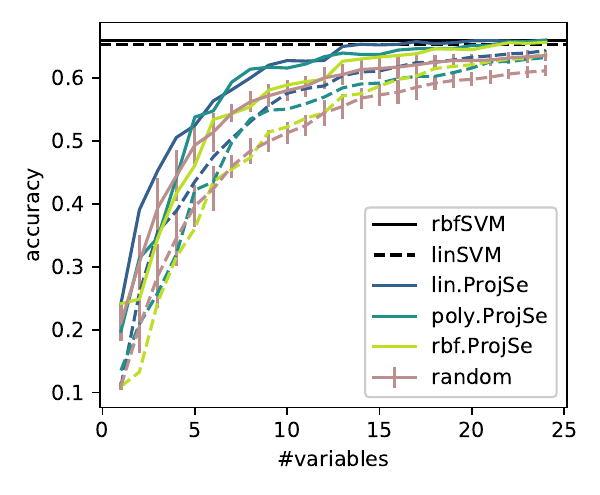}
\includegraphics[width=0.32\linewidth]{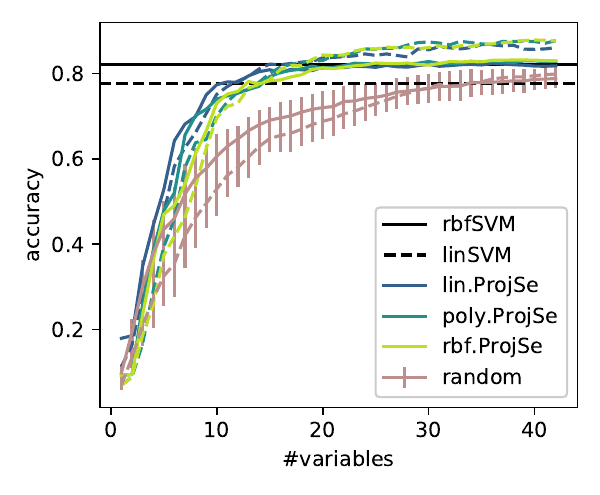}
\includegraphics[width=0.32\linewidth]{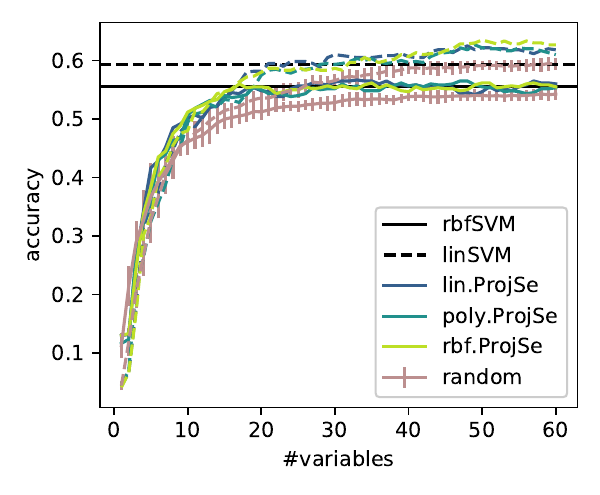}
\captionof{figure}{Results with time series datasets. The top row reports the kernel alignment of the input kernel with chosen variables (either RBF or linear) to the ideal output kernel. The bottom row reports accuracy on test set, again with both linear and RBF kernels for SVM; comparison is shown to randomly selected features and to full feature set. The colors differentiate which kernel is used on features in \vasp{}, while the line style indicates if traditional linear or RBF kernel is used on samples.}\label{fig:ka_acc_ts}
\end{figure}


We next consider a setting more aligned with our method: supervised feature selection with vector-valued output as the reference view.
Here we consider three datasets from the UEA \& UCR Time Series Classification Repository\footnote{\url{http://www.timeseriesclassification.com/dataset.php}}, Crop, NonInvasiveFetalECGThorax1 ("Thorax"), and ShapesAll, as detailed in Table~\ref{tb:ts_data}. These datasets are associated with multi-class classification tasks, and we use the one-hot encoding of the labels as the vector-valued output to perform the feature selection with \vasp{}. 
As before, we consider linear, polynomial and RBF kernels. We assess the success of the feature selection task by performing classification with SVM with the selected features - here we consider both linear and RBF kernels on the data samples.

The results are displayed in Figure~\ref{fig:ka_acc_ts}, where both kernel alignment ($KA(\mbf{K},\mbf{K}') = \langle \mbf{K}_c, \mbf{K}'_c\rangle_F /(\|\mbf{K}_c\|_F\|\mbf{K}'_c\|_F)$ where $c$ denotes centering) to the linear kernel on the one-hot-encoded outputs, and accuracy of SVM classification are shown. The different kernels used in feature selection give slightly different features; however the performance on the subsequent classification task is mostly dependent on which kernel is used on the samples. Especially for Thorax and ShapesAll datasets with higher dimensionality, it can be seen that all the \vasp{} results with linear SVM outperform using the full set of features.

\subsection{Experiments with two-view data}

\begin{table}[tb]
\caption{The detailed computation times in seconds for the 
 variable selection method, where 10, 20, 50 and
 100 variables are extracted.} 
 \label{table:time_report}
  \centering
  \begin{tabular}{@{}c@{}c@{}}
    \begin{minipage}{0.55\linewidth}
  \begin{tabular}{@{}c@{\ }|@{\ }r@{\ }r@{\ }|@{\ }r@{\ }r@{}}
  \toprule
   & \multicolumn{2}{c}{MediaMill} & \multicolumn{2}{c}{Cifar 100} \\
    
   & Linear & RBF & Linear & RBF \\ \midrule 
 Computing $K_{yx}$ &   0.015 &   0.019 &   0.221 &   0.470 \\
Computing $K_{yy}$ &   0.007 &   0.015 &   0.006 &   0.025 \\
Eigen decomp. of $K_{yy}$ &   0.003 &   0.001 &   0.001 &   0.001 \\
Centralization of $\mathbf{K}_{xx}$ \footnotemark[1] &   0.005 &   0.014 &   1.462 & 2.065 \\ 
10 variables &   0.035 &   0.045 &   1.848 &   2.640 \\
20 variables &   0.024 &   0.048 &   1.793 &   2.657 \\
50 variables &   0.037 &   0.050 &   1.804 &   2.671 \\
100 variables &   0.049 &   0.061 &   1.833 &   2.706 \\
\bottomrule
 \end{tabular}
 \footnotetext[1]{Optional}
 \end{minipage}
 &
    \begin{minipage}{0.42\linewidth}
      \includegraphics[width=2.0in,height=1.7in]{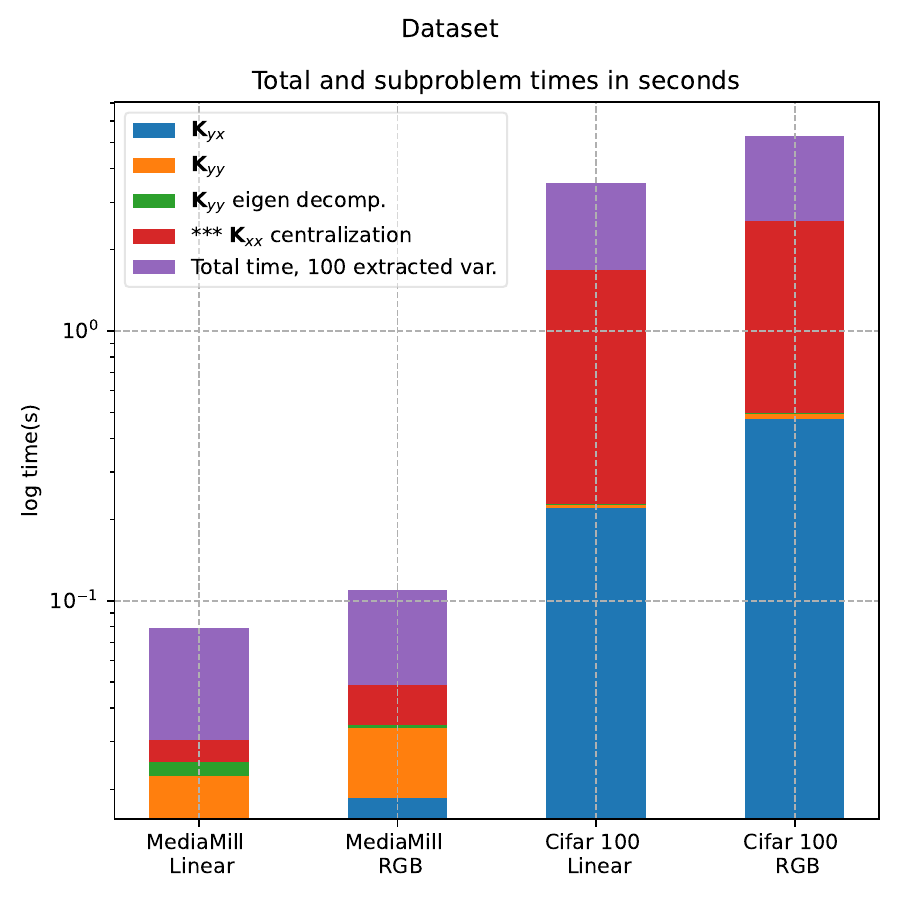}
    \end{minipage} 
 \end{tabular}
\end{table}

In our last set of experiments, we consider the following datasets:
\begin{itemize}
\item {\textbf MNIST handwritten digits} \cite{lecuny1998paper,lecuny1998data}: This dataset contains 60000 training and
  10000 test examples of handwritten digits in greyscale. 
  The number of pixels in each image is $28 \times 28 = 784$, resulting in total to 784 variables. To construct the two
  sets of variables, the image columns are split into half similarly
  as in \cite{pmlr-v28-andrew13}. Thus both views comprise of
  392 variables.   
    
\item {\textbf MediaMill dataset} \cite{10.1145/1180639.1180727}: This
  dataset contains 43907 examples which are extracted from keyframes
  of video shots. 
  There are two views in this data: text annotations (101 variables) and visual features (120 variables).
\item {\textbf Cifar100 dataset} \cite{Krizhevsky09learningmultiple}: This
  dataset, chosen to demonstrate the scalability of \vasp{}, contains 50000 training and
  10000 test examples of color images. 
  The number of pixels in each image is $32 \times 32 = 1024$, where
  to each pixel 3 colors are assigned. The examples belong to 100
  classes, where each class contains 500 training and 100 test
  examples. The classes are represted by indicator vectors.
\end{itemize}
We perform variable selection independently on both views in the datasets. After the variable selection is performed on both sides, we compute canonical correlations between all subset pairs of the extracted variables, starting from the first ones and incrementally growing them to the entire sets. 
To demonstrate the performance of the proposed variable selection algorithm \vasp{}, it
is compared to the following methods: large-scale sparse kernel
canonical correlation analysis (GradKCCA), \cite{pmlr-v97-uurtio19a},
deep canonical correlation analysis (DCCA), \cite{pmlr-v28-andrew13},
randomized non-linear CCA (RCCA), \cite{pmlr-v32-lopez-paz14},
kernel non-linear orthogonal iterations (KNOI), \cite{Wang2015LargeScaleAK}, and
CCA through Hilbert-Schmidt independence criterion (SCCA-HSIC), \cite{8594981}.  

These CCA variants explicitly or implicitly rely on
singular value decomposition, and their performance highly depends on the
distribution of the singular values of the data matrices.  
Since the data matrices have small number of dominating singular values,
we expect from a variable selection method that it can capture a
relatively small set of variables to reproduce similar accuracy,
measured in canonical correlation. 
We need to bear in mind that the CCA-based methods add up all information represented by the full
collection of variables, however the projector based selection only
relies a relatively small subset of those variables.

\begin{figure}[tb]
  \begin{subfigure}[b]{0.48\textwidth}
  \includegraphics[trim={0 0 0 2.2cm}, clip,width = 2.4in, height=2.0in]{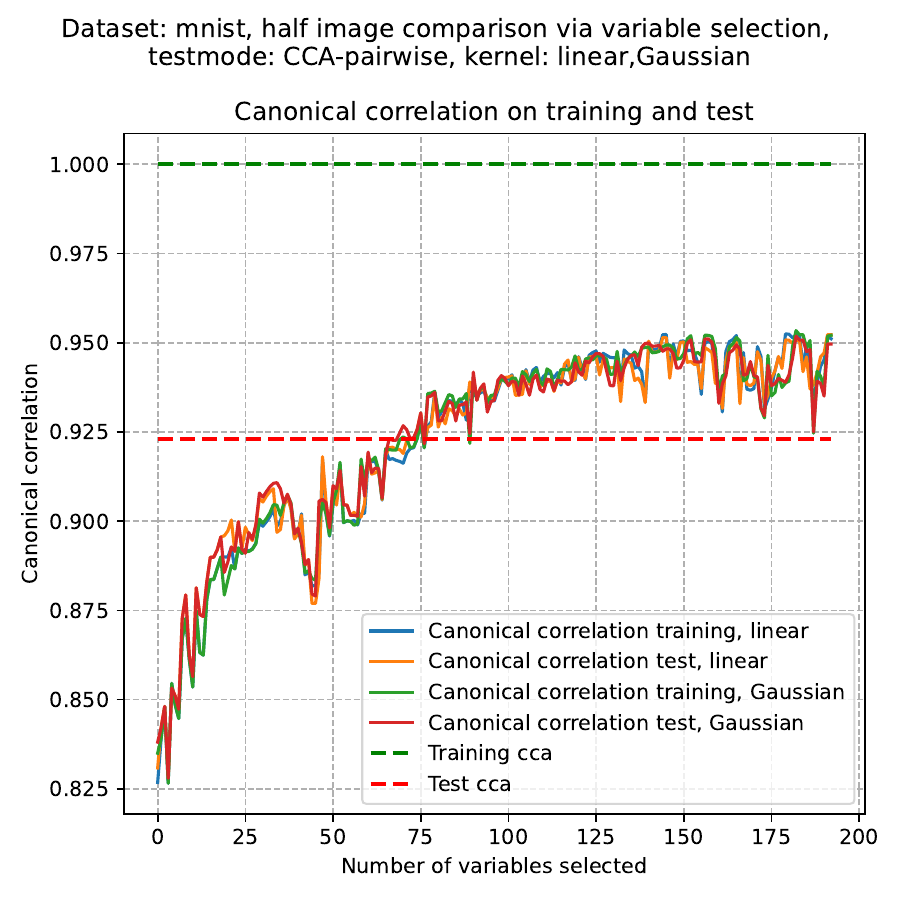}
  \caption{{ MNIST dataset}
  }
  \label{fig:mnist_cca_kernel}
  \end{subfigure}
  \hfill
  \begin{subfigure}[b]{0.48\textwidth}
  \includegraphics[trim={0 0 0 2.2cm}, clip,width = 2.4in, height=2.0in]{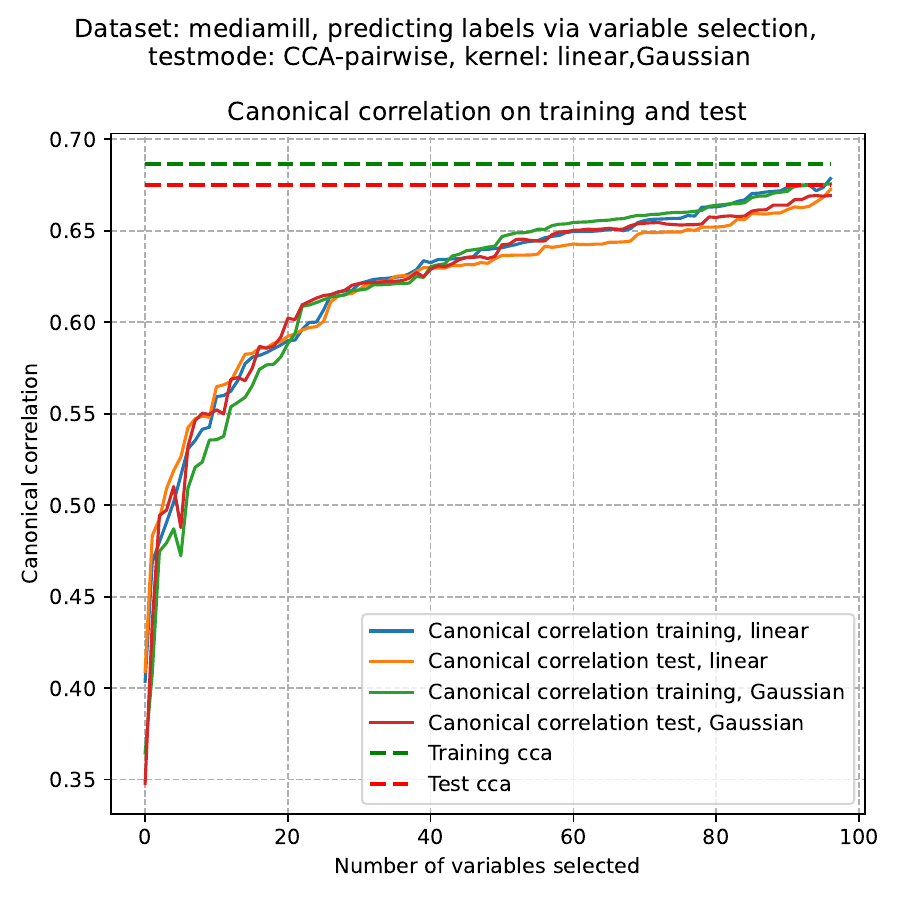}  
    \caption{{ MediaMill dataset}
  }
  
  \label{fig:mediamill_cca_kernel}
  \end{subfigure}
  \caption{Variable selection results w.r.t the number of selected variables. 
  }
  \label{fig:mnist_and_mediamill}
\end{figure}

\begin{table}[tb]

\caption{CCA comparison on the MNIST and MediaMill datasets. }\label{tbl:cca_comparison}
\begin{tabular}{c}
\begin{minipage}{0.9\linewidth}
\centering
\begin{tabular}{c@{\quad}|l@{\quad}c@{\quad}|l@{\quad}c@{\quad}@{}}
\toprule
& \multicolumn{2}{c|}{MNIST} & \multicolumn{2}{c}{MediaMill} \\ 
& $\rho_{\text{\tiny TEST}}$ & TIME (s) & $\rho_{\text{\tiny TEST}}$ & TIME (s) \\ \midrule  
Generic CCA & 0.923 & 2.40 & 0.675 & 0.429 \\ \midrule
GradKCCA  & 0.952 $\pm$ 0.001 & 56 $\pm$6     & 0.657 $\pm$ 0.007 & 8 $\pm$4 \\
DCCA      & 0.943 $\pm$ 0.003 & 4578 $\pm$203 & 0.633 $\pm$ 0.003 & 1280 $\pm$112 \\
RCCA      & 0.949 $\pm$ 0.010 &  78 $\pm$13   & 0.626 $\pm$ 0.005 &  23 $\pm$9 \\
KNOI      & 0.950 $\pm$ 0.005 & 878 $\pm$62   & 0.645 $\pm$ 0.003 & 218 $\pm$73 \\
SCCA-HSIC & 0.934 $\pm$ 0.006 & 5611 $\pm$193 & 0.625 $\pm$ 0.002 & 1804$\pm$143
  \\ 
   \midrule
 \vasp{} 10 var. & 0.847 &  & 0.542 &  \\ 
 \vasp{} 20 var. & 0.890 &  & 0.586 &  \\ 
 \vasp{} 50 var. & 0.918 &   & 0.631 &  \\ 
 \vasp{} 100 var. & 0.935 & 0.45\footnotemark[1] & 0.672 & 0.041\footnotemark[1] \\ 
 \bottomrule
\end{tabular}
\footnotetext[1]{The algorithm was run once with 100 variables, the smaller amounts were subsampled }

\end{minipage}
\end{tabular}
\end{table}

Figure~\ref{fig:mnist_and_mediamill} shows the performance of \vasp{} on MNIST and MediaMill datasets with both linear and Gaussian kernels, as functions of the number of selected variables. The results are measured by canonical correlation between the subsets of variables selected from the two views. Comparing the results to the other CCA methods in Table~\ref{tbl:cca_comparison} (taken from~\cite{pmlr-v97-uurtio19a}), we observe \vasp{} obtaining comparable performance after 20 or 50 selected variables, while being orders of magnitude faster than the other methods. 
%
Since \vasp{} is fully deterministic, there is no variance is reported for it. The heaviest computation for \vasp{} is in the beginning when eigenvalue decomposition of $\mbf{K}_{YY}$ is calculated (see Table~\ref{table:time_report}). Thus the running time varies only minimally when different number of variables is selected. 
This is also demonstrated in~\ref{table:time_report} where the running times for MNIST and Cifar100 datasets are detailed.

    
\begin{figure}[tb]
  \centering
  \begin{tabular}{@{}cc@{}}
    \begin{minipage}{0.49\linewidth}
    \includegraphics[trim={0 0 0 1.5cm}, clip,width=0.98\linewidth]{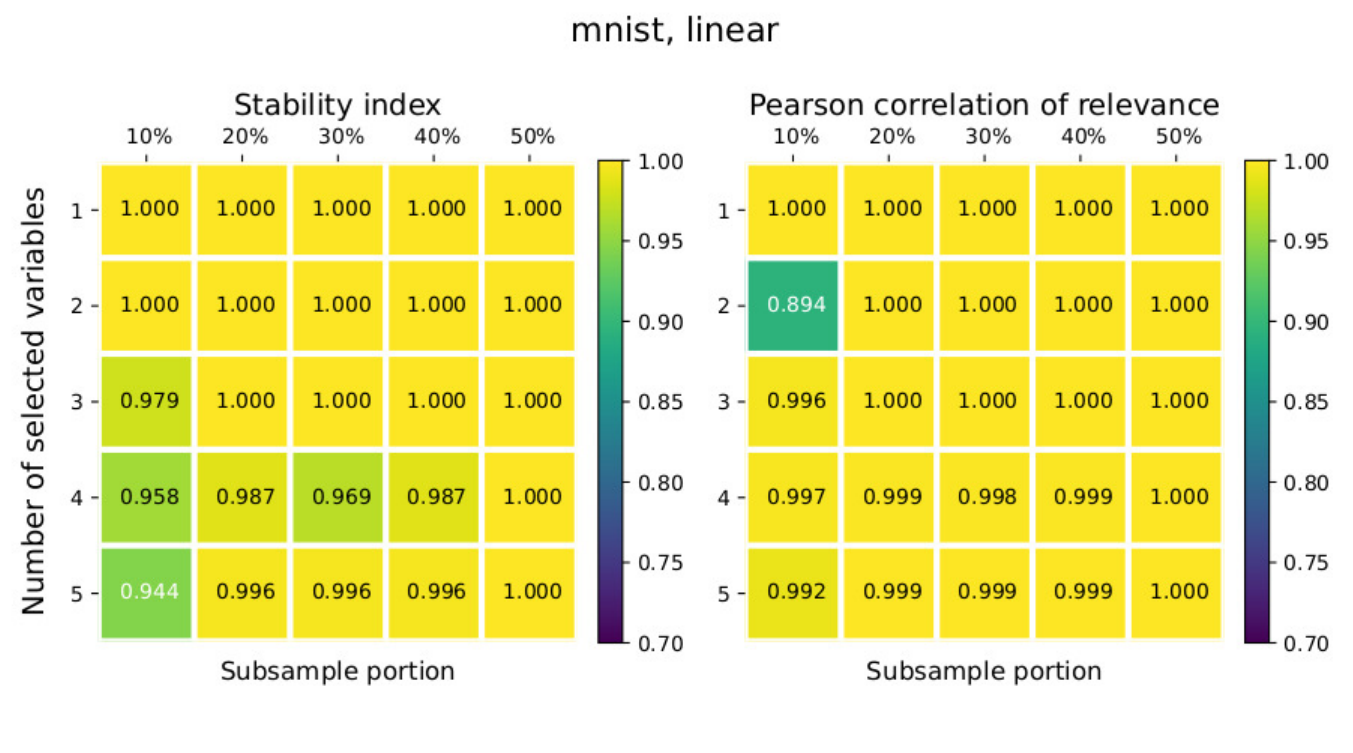}
    \end{minipage}
    &
    \begin{minipage}{0.49\linewidth}
    \includegraphics[trim={0 0 0 1.5cm}, clip,width=0.98\linewidth]{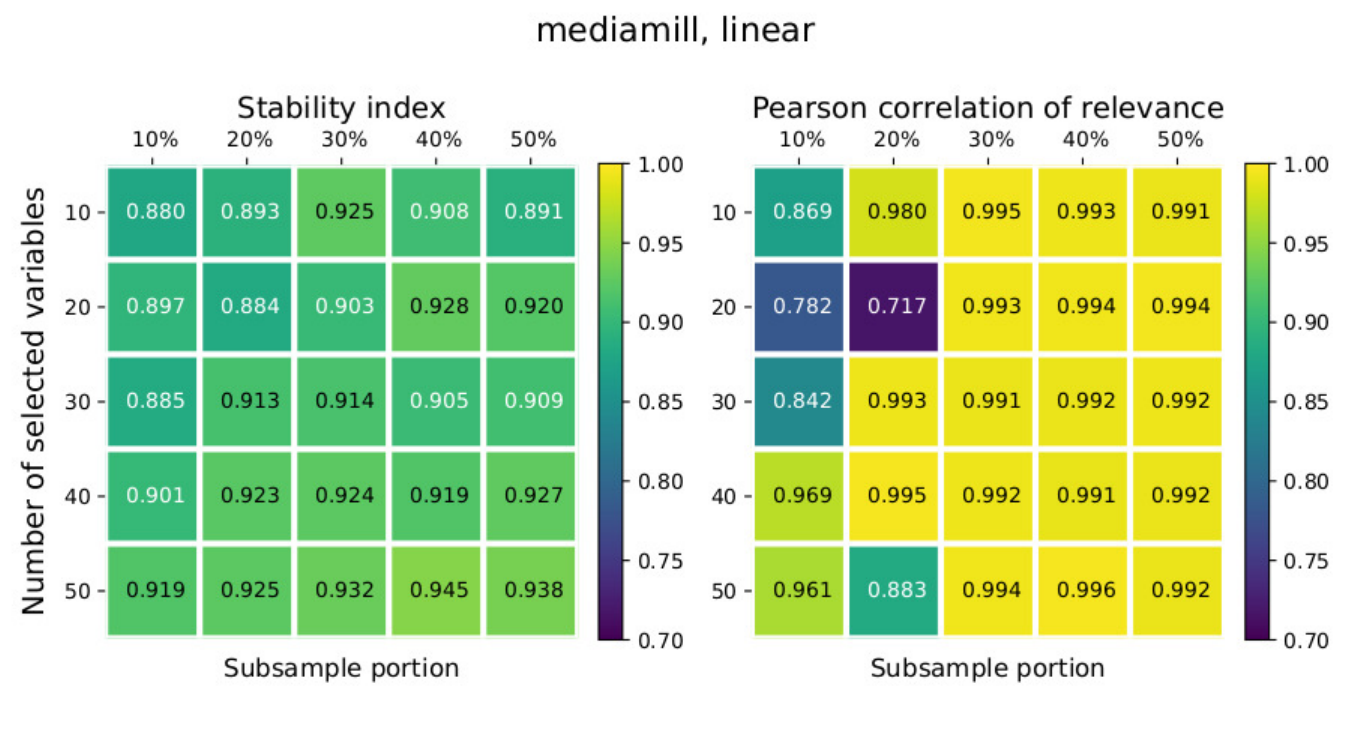}
    \end{minipage}
  \end{tabular}
\caption{Means of the two stability scores on random sub-samples computed on the data sets the MNIST (left) and the MediaMill (right), with linear kernel} 
\label{table:stability_scores}  
\end{figure}

As a variable selection method, we are interested in evaluating the stability of the selection process. 
In order to measure this, we here consider the stability index~\cite{JMLR:v18:17-514}
and the average Pearson correlation of relevance~\cite{JMLR:v22:20-366}. In both measures, higher values indicate higher stability; the maximum value in both is 1. 
First the number of extracted
variables is chosen from $(1,2,\dots,5)$ in MNIST, and from $(10,20,\dots,50)$ in MediaMill. For each number of
selected variables the subsamples are taken with the following
percentage of the entire training sets: $(10\%,20\%,\dots,50\%)$.
Then random subsets are extracted $10$ times of the size given above.
The scores are computed for each number of selected
variables, and for each subsample size and finally averaged on all
random samples.   
They are shown in
Figure \ref{table:stability_scores}, where the averages
for all pairs of subsets of variables and for all subsample sizes are presented.

\section{Conclusion}

In this paper we introduced a novel variable selection method for two-view settings. Our method is deterministic, selecting variables based on correlation defined with projection operators. The kernelised formulation of our approach paves way for efficient and highly scalable implementation, allowing the application of our method to datasets with millions of data samples. We empirically demonstrated this efficiency and the suitability of our approach for feature selection task, with both synthetic and real data. 

\section*{Declarations}
The authors wish to acknowledge the financial support by Academy of Finland through the grants 334790 (MAGITICS), 339421 (MASF) and 345802 (AIB), as well as the Global Programme by Finnish Ministry of Education and Culture

\bibliography{projective_selection_cites}
\bibliographystyle{apalike}


\end{document}